\documentclass[12pt]{article}
\usepackage[numbers]{natbib}  

\usepackage[margin=2.9cm]{geometry}
\usepackage{enumerate}
\usepackage{xcolor}

\usepackage{graphicx,bm,url,microtype}
\usepackage{paralist}
\usepackage{authblk}
\usepackage{amsfonts,amssymb,amsthm,amsmath, amsthm}

\newtheorem{proposition}{{\sc\bf Proposition}}
\newtheorem{theorem}{{\sc\bf Theorem}}
\newtheorem{lemma}{{\sc\bf Lemma}}
\newtheorem{remark}{{\sc\bf Remark}}

\newtheorem{definition}{{\sc\bf Definition}}
\newtheorem{corollary}{{\sc\bf Corollary}}

\usepackage{amsmath,amsthm,mathrsfs,amsfonts}
\usepackage{url}
\usepackage{amsfonts,todonotes}
\usepackage{amssymb}
\usepackage{bbm}
\usepackage{graphicx}
\usepackage{subfig}
\usepackage{float}

\usepackage{multirow}
\renewcommand{\Pr}[1]	{\mathbb{P}\left( #1 \right)}

	\title{Imbalanced  Classification under Capacity Constraints}
    \author{
Daniel Fraiman \\
\small Departamento de Matemática y Ciencias, Universidad de San Andrés, Buenos Aires, Argentina\\ 
\small CONICET, Argentina. \\
\and
Ricardo Fraiman\\
\small Departamento de Matemática y Ciencias, Universidad de San Andrés, Buenos Aires, Argentina\\ 
\small PEDECIBA, Matemática, Uruguay.
}
	\date{  }
	
\begin{document}

\maketitle

	\begin{abstract}
Detecting observations from a minority class under severe class imbalance is a central challenge in applications such as fraud detection, medical screening, and industrial quality control. In these settings, each positive prediction triggers a costly follow-up action, an MRI scan, a transaction audit, whose execution is subject to real operational constraints. This paper proposes a formal classification framework under capacity constraints: given a user-defined bound limit $b$ on the proportion of observations that can be labeled as belonging to the minority class, the goal is to find the classifier that maximizes sensitivity on that class. We characterize the optimal classifier under this constraint and establish its equivalence with the classical Bayes classifier under a reweighting of the prior probabilities. We also introduce a capacity-adjusted  performance metric $M$ that accounts for the effective detection rate when the capacity constraint is binding. The framework is implemented on top of standard learning methods, k-NN, SVM, random forests, and neural networks, and statistical consistency is established for each. We further show that these methods reduce to post-hoc 
thresholding when no hyperparameters are oriented toward the capacity-constrained objective, and introduce a capacity-aware support vector machine that exploits the constraint during training and achieves the strongest empirical performance. Experiments on the Taiwanese credit card default dataset confirm that capacity-constrained classifiers substantially outperform both classical approaches and  SMOTE under high imbalance regimes. The framework extends naturally to multiclass settings and online environments.

\end{abstract}

 Keywords: Imbalanced classification; Capacity constraint; resource-limited classification; thresholding methods

\tableofcontents	
	\section{Introduction}

In many classification problems, the goal is not simply to minimize overall error, but to detect as many instances of a rare class as possible under explicit operational constraints. A medical screening program may be able to perform at most fifty follow-up MRI scans per day. A fraud detection system may have a fixed budget of transactions that can be subjected to manual review within the required response window. An industrial quality control pipeline may only be able to pull a limited number of units off the line for detailed inspection. In each of these settings, the practitioner faces a problem that is fundamentally different from standard classification: the constraint is not on accuracy, but on the volume of positive predictions the system is permitted to make.
 
This distinction is not merely a matter of emphasis. It changes the problem. A classifier that achieves high overall accuracy by predominantly predicting the majority class provides essentially no value in these settings. At the same time, a classifier that attempts to detect every minority-class instance will inevitably exceed the available capacity, forcing the practitioner to act on only a random subset of its positive predictions, an outcome that is strictly worse than one that concentrates its detections. The right objective is to maximize detection of the minority class subject to an explicit bound on the rate of positive predictions. We call this the capacity-constrained classification problem.

Despite its practical ubiquity, this formulation has not been studied systematically. The imbalanced classification literature has developed a rich set of tools, reweighting schemes, cost-sensitive learning, and data augmentation procedures such as SMOTE~\cite{smote}, but these methods are designed to improve predictive performance under class imbalance without directly controlling the selection rate. A common post-hoc strategy is to rank observations by classifier score and apply a threshold so that at most a fraction $b\pi_0$ of the population is flagged. While practical, this approach is suboptimal: the score is trained under a different objective, and the resulting ranking is not necessarily aligned with the capacity-constrained problem. We show in Section~\ref{sec:empirical} that explicitly incorporating the capacity constraint into training yields consistent  improvements over post-hoc thresholding.

This paper makes five main contributions. First, we formalize the capacity-constrained classification problem. Given a user-specified parameter $b>0$, we restrict attention to classifiers satisfying
$$
\mathbb{P}(g(X)=0)\leq b\pi_0,
$$
where $Y=0$ denotes the minority class and $\pi_0=\mathbb{P}(Y=0)$. Within this class, the goal is to maximize the detection probability $\mathbb{P}(g(X)=0\mid Y=0)$. The parameter $b$ has a direct operational interpretation: $b=1$ allows the classifier to flag at most a proportion $\pi_0$ of the population, matching the minority-class prevalence, whereas $b>1$ allows a larger flagged fraction, trading specificity for increased coverage of rare events.

Second, we characterize the optimal classifier under this constraint. In the continuous case, the optimal rule selects the region of the feature space where the likelihood ratio $f_0(x)/f_1(x)$ exceeds a threshold $\gamma^\star$, chosen to satisfy the capacity constraint. This result is closely related to the Neyman--Pearson lemma: the capacity constraint plays the role of the size constraint, while the detection probability plays the role of power. We also establish a precise connection with the classical Bayes rule. In particular, the capacity-constrained optimal classifier can be interpreted as a Bayes rule under a modified prior, which provides both theoretical insight and a practical way to adapt standard learning methods through effective class weights.

Third, we introduce a capacity-adjusted performance metric $M$ for comparing classifiers under limited resources. Standard metrics such as sensitivity, specificity, accuracy, and AUC do not directly account for violations of the capacity constraint. In practice, if a classifier flags more observations than can be processed, only a feasible fraction of those flags can be acted upon, reducing the effective detection rate. The metric $M$ captures this effect: it coincides with sensitivity when the capacity constraint is satisfied and scales sensitivity by the feasible fraction when it is violated. Thus, under the resource-limited objective considered here, the relevant quantity is not raw accuracy but capacity-adjusted detection.

Fourth, we establish consistency results for the proposed framework across a broad range of learning procedures, including kernel density estimators, $k$-nearest neighbors, support vector machines, random forests, and neural networks. These results clarify the conditions under which capacity-constrained learning is statistically well behaved. We also show that imposing the capacity constraint does not increase the complexity of the classifier class; in particular,
$$
\mathrm{VCdim}(\mathcal{C}_b)\leq \mathrm{VCdim}(\mathcal{C}),
$$
which is useful for deriving uniform convergence bounds.

Fifth, we introduce capacity-aware learning, a class of methods that incorporate the capacity constraint directly into the training stage rather than enforcing it post-hoc through threshold adjustment. We show that the consistency results of the preceding contribution collapse to post-hoc thresholding when no hyperparameters are available to be oriented toward the capacity-constrained objective, and that genuine gains require exploiting the capacity constraint  during training. As a concrete instance, we develop a  capacity-aware support vector machine and demonstrate empirically that it strictly improves upon both post-hoc thresholding and 
standard capacity-constrained classifiers.

The remainder of the paper is organized as follows. 
Section~\ref{sec:theory} develops the theoretical framework, 
characterizes the optimal classifier, and establishes its connection 
to the Bayes rule. Section~\ref{sec:consistency} establishes 
consistency for each class of learning procedures and evaluates them 
empirically on a real dataset. Section~\ref{sec:beyond} introduces 
capacity-aware learning, characterizes its relationship with post-hoc thresholding, and presents the capacity-aware SVM with an 
empirical illustration. Section~\ref{sec:extensions} discusses  extensions to multiclass settings and online environments. Section 6 concludes.

\subsection*{Relation to existing work}
The imbalanced classification literature is surveyed in~\cite{unbalanced1,rev3,libro1,rev4},
with recent developments in~\cite{review2024,rosenblatt2025} and the references therein. Ensemble methods for imbalanced data are discussed in~\cite{ensembles}, while multiclass extensions are considered in~\cite{multiclass}. Applications to credit card fraud detection can be found in~\cite{credit,fraude}. Cost-sensitive learning methods modify the training loss to penalize minority-class errors more heavily, but do not impose an explicit bound on the selection rate. SMOTE~\cite{smote} addresses imbalance at the data level by generating
synthetic minority-class observations; its limitations in non-convex regions and near class boundaries are well documented. Thresholding methods are discussed in~\cite{threshold} in the context of general imbalanced classification, but without the capacity constraint formulation developed here. Consistency results for plug-in algorithms under confusion-matrix constraints appear in~\cite{consistencia,multiclass}; our contribution is complementary, providing a direct characterization of the optimal rule and closed-form consistency arguments tailored to specific learning procedures rather than general algorithmic frameworks.

\section{Theoretical Framework}
\label{sec:theory}
 
Many important classification problems involve an inherent asymmetry between categories. This asymmetry is not only due to differences in class sizes: crucially, different actions are taken when a new observation is classified into one category or the other. Classifying an observation as belonging to the minority class triggers a costly downstream procedure, a diagnostic scan, a transaction audit, a physical inspection, whose execution is subject to real operational limits. In contrast, classifying an observation as belonging to the majority class is accepted
without further action.
 
We begin by fixing notation and defining the constrained classification problem
(Section~\ref{subsec:setup}). We then characterize the optimal classifier under the capacity constraint (Section~\ref{subsec:optimal}) and establish its precise relationship with the classical Bayes rule (Section~\ref{subsec:bayes}). Finally, we introduce the performance metric appropriate for this setting (Section~\ref{subsec:metric}).
 
\subsection{Setup and notation}
\label{subsec:setup}
 
Let $Y$ denote the class label of a randomly selected observation, where $Y = 0$ corresponds to the minority class and $Y = 1$ to the majority class, and let $X \in \mathbb{R}^d$ denote the associated feature vector. The minority class ($Y = 0$) is the event of interest throughout. Define $\pi_0 = P(Y = 0)$ and $\pi_1 = P(Y = 1) = 1 - \pi_0$, and assume that, conditional on $Y = k$, the random vector 
$X$ has density $f_k$ and distribution function $F_k$, for $k \in \{0, 1\}$. Let
$$
    \mathcal{C} := \{g : \mathbb{R}^d \to \{0,1\}\}
$$
denote the family of measurable classifiers. Sensitivity is defined as
$\mathbb{P}(g(X) = 0 \mid Y = 0)$ and specificity as $\mathbb{P}(g(X) = 1 \mid Y = 1)$.
 
In practice, there is always a maximum capacity for carrying out the additional procedures used
to confirm minority-class membership. We incorporate this limitation directly into the problem
formulation. For a given value $0 < b \leq 1/\pi_0$, define the constrained subclass
\[
    \mathcal{C}_b := \{g \in \mathcal{C} : \mathbb{P}(g(X) = 0) \leq b\pi_0\}.
\]
The quantity $b\pi_0$ is the operational capacity: the maximum proportion of inputs that
may be flagged as belonging to the minority class. The parameter $b$ controls how much the
system is allowed to flag relative to the true minority prevalence. Setting $b = 1$ permits
flagging at most a proportion $\pi_0$ of the population, matching the true minority base rate.
Setting $b > 1$ allows flagging above the base rate, which is useful when the positive class is
hard to identify and achieving adequate sensitivity requires screening more observations than the
expected number of true positives. Note that $b = 0$ reduces to the trivial classifier that never
assigns the minority label.
 
\subsection{The optimal capacity-constrained classifier}
\label{subsec:optimal}
 
Within the constrained class $\mathcal{C}_b$, the goal is to maximize detection of the minority
class. Rather than the classical risk $R(g) = \mathbb{P}(g(X) \neq Y)$, we consider the loss
function
$$
    L(g) = 1 - \mathbb{P}(g(X) = 0,\, Y = 0),
$$
and define the optimal capacity-constrained classifier as
\begin{equation}
    g_b = \operatorname*{arg\,min}_{g \in \mathcal{C}_b} L(g).
    \label{eq:opt}
\end{equation}
Maximizing $\mathbb{P}(g(X) = 0 \mid Y = 0)$ subject to $\mathbb{P}(g(X) = 0) \leq b\pi_0$ is
equivalent to~\eqref{eq:opt}, since $\pi_0$ is fixed.
 
\begin{proposition}
\label{prop:optimal}
Under the above assumptions, the optimal classifier $g_b$ is given by
$$
    g_b(x) =
    \begin{cases}
        0 & \text{if } x \in A_{\gamma^\star}, \\
        1 & \text{if } x \notin A_{\gamma^\star},
    \end{cases}
$$
where $A_\gamma = \{x \in \mathbb{R}^d : \mathrm{d}F_0(x) > \gamma \ \mathrm{d}F_1(x)\}$, and $\gamma^\star$ is the
value of $\gamma$ satisfying
\[
    P_\gamma = \pi_0 F_0(A_\gamma)+ \pi_1 F_1(A_\gamma)   = b\pi_0.
\]

\end{proposition}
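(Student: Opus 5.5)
The argument is a Neyman--Pearson type exchange. We take as given, as the statement implicitly does, that a $\gamma^\star$ with $P_{\gamma^\star}=b\pi_0$ exists. This holds in the continuous case when $\gamma\mapsto P_\gamma$ is continuous, since $P_\gamma$ is nonincreasing in $\gamma$, tends to $0$ as $\gamma\to\infty$, and equals at least $\pi_0$ at $\gamma=0$.

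Rewrite the objective. Minimizing $L(g)$ over $\mathcal{C}_b$ is the same as maximizing $\mathbb{P}(g(X)=0,Y=0)=\pi_0F_0(G)$, where $G=\{x:g(x)=0\}$. The constraint becomes $\pi_0F_0(G)+\pi_1F_1(G)\le b\pi_0$. Thus we must choose a measurable set $G$ that maximizes $\pi_0F_0(G)$ subject to a bound on the mixture measure $\mu=\pi_0F_0+\pi_1F_1$.

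First, feasibility. By the choice of $\gamma^\star$, $\mu(A_{\gamma^\star})=b\pi_0$, so $g_b\in\mathcal{C}_b$.

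Second, the key inequality. Let $g\in\mathcal{C}_b$ with region $G$, and write $A=A_{\gamma^\star}$. The pointwise integrand $(\mathbbm{1}_A-\mathbbm{1}_G)\,(f_0-\gamma^\star f_1)$ is nonnegative. On $A$ we have $f_0>\gamma^\star f_1$ and $\mathbbm{1}_A-\mathbbm{1}_G\ge 0$; off $A$ we have $f_0\le\gamma^\star f_1$ and $\mathbbm{1}_A-\mathbbm{1}_G\le 0$. Integrating gives
\[
F_0(A)-F_0(G)\ \ge\ \gamma^\star\bigl(F_1(A)-F_1(G)\bigr).
\]
The difficulty is that the constraint bounds $\mu$, not $F_1$ alone, so we must translate. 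Write $\mu(A)-\mu(G)=\pi_0(F_0(A)-F_0(G))+\pi_1(F_1(A)-F_1(G))$; this is $\ge 0$ because $\mu(A)=b\pi_0\ge\mu(G)$. Set $\Delta_0=F_0(A)-F_0(G)$ and $\Delta_1=F_1(A)-F_1(G)$. We have $\Delta_0\ge\gamma^\star\Delta_1$ and $\pi_0\Delta_0+\pi_1\Delta_1\ge0$.

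We now show $\Delta_0\ge0$. If $\Delta_1\ge0$, then $\Delta_0\ge\gamma^\star\Delta_1\ge0$ since $\gamma^\star\ge0$. If $\Delta_1<0$, then $\pi_0\Delta_0\ge-\pi_1\Delta_1>0$, so again $\Delta_0>0$. Hence $F_0(A)\ge F_0(G)$, i.e.\ $L(g_b)\le L(g)$ for every $g\in\mathcal{C}_b$, which proves optimality.

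An equivalent route is to work with $f_0/\mu$-densities. The set $A_\gamma$ is also a superlevel set of $\pi_0 f_0/(\pi_0f_0+\pi_1f_1)$, i.e.\ of the posterior $\eta_0(x)=\mathbb{P}(Y=0\mid X=x)$, with the threshold $\pi_0/(\pi_0+\pi_1\gamma)$. Maximizing $\int_G\eta_0\,d\mu$ subject to $\mu(G)\le b\pi_0$ is then a fractional knapsack problem solved by taking the largest values of $\eta_0$ first. This reformulation anticipates the Bayes-rule-with-reweighted-prior connection that follows.

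The main obstacle is existence and exactness of $\gamma^\star$. If $\mu$ charges the level set $\{f_0=\gamma f_1\}$, then $P_\gamma$ jumps and no $\gamma$ gives equality. We would handle this by assuming continuity, i.e.\ the ``continuous case'' in which ties have $\mu$-measure zero. Otherwise one adds a randomized or partial inclusion of the tie set, exactly as in the Neyman--Pearson lemma. The exchange argument above goes through unchanged once $A$ is replaced by $A$ together with a suitable subset of the tie set.
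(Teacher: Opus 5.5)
Your argument is correct, but your main route is not the paper's. The paper's proof is the informal greedy argument that you mention only as your ``equivalent route''. It observes that the gain per unit of marginal mass, $\mathrm{d}F_0/(\pi_0\,\mathrm{d}F_0+\pi_1\,\mathrm{d}F_1)$, is increasing in the likelihood ratio, and concludes that the optimal region must be a superlevel set $A_{\gamma^\star}$. That route gives intuition and makes the link to the posterior, and hence to the reweighted Bayes rule, immediate. However, it asserts rather than proves that filling the budget in decreasing order of this ratio beats every feasible $G$. Your Neyman--Pearson exchange inequality supplies exactly that missing certificate. Your case split on the sign of $\Delta_1$ is the extra step forced by the budget bounding the mixture $\mu$ rather than $F_1$ alone.

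You can avoid the case split by doing the exchange at the posterior level. With $c=\pi_0\gamma^\star/(\pi_0\gamma^\star+\pi_1)\in[0,1)$ one has $A_{\gamma^\star}=\{\pi_0f_0>c(\pi_0f_0+\pi_1f_1)\}$. Integrating $(\mathbf{1}_A-\mathbf{1}_G)\bigl(\pi_0f_0-c(\pi_0f_0+\pi_1f_1)\bigr)\ge 0$ gives $\pi_0\Delta_0\ge c\,(\mu(A)-\mu(G))\ge 0$ directly; this is the rigorous form of the paper's heuristic. Note that the threshold you wrote, $\pi_0/(\pi_0+\pi_1\gamma)$, has $\gamma$ inverted. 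It should be $\pi_0/(\pi_0+\pi_1/\gamma)$, which tends to $1$ (not $0$) as $\gamma\to\infty$.

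Two small points remain. First, your use of $\gamma^\star\ge 0$ needs a line of justification: $P_\gamma=1$ for $\gamma<0$, so a negative $\gamma^\star$ forces $b\pi_0=1$. In that case $A_{\gamma^\star}$ carries all of $F_0$ and the claim is trivial. Second, your existence sketch has two gaps. The claim $P_\gamma\to 0$ fails when $F_0(\{f_1=0\})>0$, since the limit is then $\pi_0F_0(\{f_1=0\})$. For $b>1$ you need $\mu(\{f_0>0\})\ge b\pi_0$, not just $P_0\ge\pi_0$. Since the statement presupposes $\gamma^\star$, these affect only your side remark.
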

\begin{proof}
Let $A$ be the region classified as class 0, so that $g_A(x) = 0$ if $x \in A$ and
$g_A(x) = 1$ otherwise. The problem becomes
$$
    \max_{A} F_0(A)
    \quad \text{subject to} \quad
    \pi_0 F_0(A) + \pi_1 F_1(A) \leq b \pi_0.
$$
 
An optimal set must allocate the available capacity to points that yield the largest contribution
to the objective per unit of marginal probability mass. The relevant ratio is
$$
    \frac{\mathrm{d}F_0(x)}{\pi_0 \ \mathrm{d}F_0(x) + \pi_1 \ \mathrm{d}F_1(x)}
    = \frac{\mathrm{d}F_0(x)/\mathrm{d}F_1(x)}{\pi_0\, \mathrm{d}F_0(x)/\mathrm{d}F_1(x) + \pi_1},
$$
which is strictly increasing in the likelihood ratio $\mathrm{d}F_0(x)/\mathrm{d}F_1(x)$. The optimal region is
therefore $A_{\gamma^\star} = \{x : \mathrm{d}F_0(x)/\mathrm{d}F_1(x) > \gamma^\star\}$, where $\gamma^\star$ is
chosen so that $\mathbb{P}(X \in A_{\gamma^\star}) = b\pi_0$.  If the distribution of
$\mathrm{d}F_0(x)/\mathrm{d}F_1(x)$ has no atom at $\gamma^\star$, equality holds exactly.
\end{proof}

\begin{remark}
The structure of the optimal classifier mirrors that of the Neyman--Pearson lemma. In hypothesis
testing, the most powerful test at significance level $\alpha$ rejects when the likelihood ratio
exceeds a threshold chosen to make the rejection probability exactly $\alpha$. Here, the
capacity constraint $\mathbb{P}(g(X) = 0) \leq b\pi_0$ plays the role of the size constraint,
and sensitivity plays the role of power. The optimal classifier is therefore the analogue of the
most powerful test in the classification setting with a resource budget.
\end{remark}

\begin{remark}
The classical Bayes classifier $g_c = \operatorname*{arg\,min}_{g} R(g)$ assigns label 0 to the
region $\mathcal{B} = \{x : \pi_0 dF_0(x) > \pi_1 dF_1(x)\}$, which corresponds to thresholding
the likelihood ratio at $\pi_1/\pi_0$. The capacity-constrained classifier $g_b$ also
thresholds the likelihood ratio, but at $\gamma^\star$, which is determined by the budget $b$
rather than by the class priors. When the constraint is not binding, that is, when
$\mathbb{P}(g_c(X) = 0) \leq b\pi_0$,  the two classifiers coincide. When the constraint is
binding, $g_b$ expands the decision region for class 0 relative to $g_c$, at the cost of
reduced specificity. Moreover,
$$
    \mathbb{P}(g_b(X) = Y) \geq \pi_1 - b\pi_0(1 - 2\pi_0),
$$
so the overall accuracy of $g_b$ remains bounded below.
\end{remark}
 
\subsection{Relationship with the classical Bayes classifier}
\label{subsec:bayes}
 
The previous remark identifies a structural similarity between $g_b$ and $g_c$: both threshold
the likelihood ratio $f_0(x)/f_1(x)$, differing only in the choice of threshold. This
observation leads to a precise equivalence that clarifies the role of the capacity parameter $b$.
 
\begin{proposition}
\label{prop:bayes_equiv}
For any capacity parameter $b$ and class probabilities $(\pi_0, \pi_1)$, there exists a
modified prior $\pi_0'$ such that the capacity-constrained classifier $g_b$ under
$(f_0, f_1, \pi_0)$ coincides with the classical Bayes classifier under $(f_0, f_1, \pi_0')$.
\end{proposition}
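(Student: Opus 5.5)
The plan is to match the two likelihood-ratio thresholds. By Proposition~\ref{prop:optimal}, $g_b$ labels a point $0$ exactly on $A_{\gamma^\star}=\{x : \mathrm{d}F_0(x) > \gamma^\star\, \mathrm{d}F_1(x)\}$, with $\gamma^\star\ge 0$ fixed by the capacity equation $P_{\gamma^\star}=b\pi_0$. By the second remark, the Bayes classifier under a prior $(\pi_0',\pi_1')$ labels $0$ exactly on $\{x : \mathrm{d}F_0(x) > (\pi_1'/\pi_0')\,\mathrm{d}F_1(x)\}$. The two decision regions are therefore identical as soon as $\pi_1'/\pi_0' = \gamma^\star$. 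Since the densities $f_0,f_1$ are the same in both problems, no other comparison is needed.

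The construction is to solve $(1-\pi_0')/\pi_0' = \gamma^\star$, which gives
\[
\pi_0' = \frac{1}{1+\gamma^\star}, \qquad \pi_1' = \frac{\gamma^\star}{1+\gamma^\star}.
\]
For $\gamma^\star\in(0,\infty)$ this is a genuine prior in $(0,1)$. The map $\gamma\mapsto 1/(1+\gamma)$ is a strictly decreasing bijection from $(0,\infty)$ onto $(0,1)$, so each admissible threshold corresponds to exactly one modified prior. I will also note the interpretation: when the constraint binds, $\gamma^\star<\pi_1/\pi_0$, hence $\pi_0'>\pi_0$. The capacity constraint thus acts as an inflation of the minority prior, giving the effective class weights $\pi_0'/\pi_0$ and $\pi_1'/\pi_1$. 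Conversely, when the constraint does not bind, we may take $\pi_0'=\pi_0$, consistent with the remark that $g_b=g_c$ in that case.

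The step needing the most care is checking that $\gamma^\star$ is well defined and finite and positive, and handling the edge cases. The function $\gamma\mapsto P_\gamma$ is nonincreasing and right-continuous. It tends to $\mathbb{P}(\mathrm{d}F_0>0)$ as $\gamma\downarrow 0$ and to $\mathbb{P}(\mathrm{d}F_1=0,\ \mathrm{d}F_0>0)$ as $\gamma\to\infty$. I would handle the cases as follows.
\begin{itemize}
\item If the likelihood ratio has no atom at the relevant level, the equation $P_\gamma=b\pi_0$ has a solution; take $\gamma^\star$ to be the infimum of solutions, as in Proposition~\ref{prop:optimal}.
\item If $\gamma^\star=0$ (for example $b=1/\pi_0$, where everything may be flagged), the limiting prior $\pi_0'=1$ gives the degenerate Bayes rule. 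Alternatively, note that any $\pi_0'$ close to $1$ yields the same region up to null sets.
\item If the required threshold is $\gamma^\star=\infty$ (a very small $b$), the limit $\pi_0'\to 0$ plays the same role.
\item If there are atoms, the optimal rule of Proposition~\ref{prop:optimal} may need randomization on $\{\mathrm{d}F_0=\gamma^\star\,\mathrm{d}F_1\}$. The Bayes rule under $\pi_0'$ is then indifferent on that set, since both labels have equal posterior there. So $g_b$ is still a Bayes classifier under $\pi_0'$, with the tie-breaking chosen to meet the capacity exactly.
\end{itemize}
With these cases handled, the identity of the regions $A_{\gamma^\star}$ and $\mathcal{B}'$ completes the proof.
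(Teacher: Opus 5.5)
Your core argument is exactly the paper's proof, and it is correct. You read off from Proposition~\ref{prop:optimal} that $g_b$ thresholds $\mathrm{d}F_0/\mathrm{d}F_1$ at $\gamma^\star$, then solve $\pi_1'/\pi_0'=\gamma^\star$ to get $\pi_0'=1/(1+\gamma^\star)$. Your treatment of atoms is a correct refinement the paper does not make: the Bayes rule under $\pi_0'$ is indifferent on $\{\mathrm{d}F_0=\gamma^\star\,\mathrm{d}F_1\}$, so a capacity-saturating split of that set is still Bayes-optimal under $\pi_0'$. It is genuinely needed, for example in the paper's ellipse simulation, where $f_0/f_1$ is constant on the inner ellipse, so for $b=1$ that set must be split. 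However, several of your side claims are false and should be removed or corrected.

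(i) The direction of the prior shift is reversed, and the non-binding case is wrong. Since $\gamma^\star$ is fixed by $P_{\gamma^\star}=b\pi_0$ and $\gamma\mapsto P_\gamma$ is nonincreasing, the two cases go as follows. If $\mathbb{P}(g_c(X)=0)<b\pi_0$, then $\gamma^\star<\pi_1/\pi_0$ and hence $\pi_0'>\pi_0$. If $\mathbb{P}(g_c(X)=0)>b\pi_0$, which is the case where the constraint actually binds on $g_c$, then $\gamma^\star>\pi_1/\pi_0$ and $\pi_0'<\pi_0$. In particular, $\pi_0'=\pi_0$ is not available in general when the constraint is slack for $g_c$. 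The paper's Cauchy illustration is a counterexample: $g_c$ flags nothing, yet $g_b$ flags mass $b\pi_0$, so the two differ on a set of positive probability. Generically, $\pi_0'=\pi_0$ is valid only when $\mathbb{P}(g_c(X)=0)=b\pi_0$. The remark you cite is itself inconsistent with Proposition~\ref{prop:optimal} on this point.

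(ii) For $\gamma^\star=0$, the choice $\pi_0'=1$ is fine. But your alternative, that any $\pi_0'$ near $1$ gives the same region up to null sets, is false whenever $\mathrm{d}F_0/\mathrm{d}F_1$ takes arbitrarily small positive values with positive probability (e.g.\ Gaussian classes). In that case $\{0<\mathrm{d}F_0\le\varepsilon\,\mathrm{d}F_1\}$ has positive mass for every $\varepsilon>0$.

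(iii) Suppose $b\pi_0<\mathbb{P}(\mathrm{d}F_1=0,\,\mathrm{d}F_0>0)$, the situation that forces an infinite threshold. Then every Bayes region with $\pi_0'>0$ contains all of $\{\mathrm{d}F_1=0,\,\mathrm{d}F_0>0\}$ and therefore violates the capacity, so letting $\pi_0'\to 0$ does not help. What works is the degenerate prior $\pi_0'=0$ together with your own tie-breaking argument. Under $\pi_0'=0$ the Bayes rule is indifferent on $\{\mathrm{d}F_1=0\}$, which contains the region flagged by $g_b$.
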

 
\begin{proof}
The classifier $g_b$ thresholds the likelihood ratio at $\gamma^\star$. The Bayes classifier
under prior $(\pi_0', \pi_1')$ thresholds at $\pi_1'/\pi_0'$. Setting $\pi_1'/\pi_0' =
\gamma^\star$ yields the same decision boundary.
\end{proof}

This equivalence has several important consequences. First, it makes the implicit
reweighting transparent. Standard approaches to imbalanced classification, such as
SMOTE or cost-sensitive learning, also amount in effect to modifying the implicit
class weights used during training. However, they do so without a direct criterion
linking the degree of reweighting to the operational constraint. In contrast,
Proposition~\ref{prop:bayes_equiv} makes the reweighting explicit: the modified
prior $\pi_0'$ is uniquely determined by $b$ and $(\pi_0,\pi_1)$ through the
condition $\pi_1'/\pi_0'=\gamma^\star$. Thus, the practitioner specifies $b$ from
observable operational quantities, such as the number of cases the system can process
per unit time, and the corresponding prior adjustment follows automatically.

The relationship between $b$ and $\pi_0'$ is also direct. As $b$ increases, the
capacity constraint becomes less restrictive, the threshold $\gamma^\star$ decreases,
and the modified prior $\pi_0'$ increases relative to $\pi_0$. In the limit
$b\to 1/\pi_0$, the constraint becomes vacuous and the classifier approaches the rule
that always predicts class $0$. Conversely, as $b\to 0$, $\gamma^\star\to\infty$ and
the classifier approaches the rule that always predicts class $1$. The classical Bayes
classifier corresponds to the value of $b$ for which $\gamma^\star=\pi_1/\pi_0$,
that is, the value at which the capacity constraint is exactly satisfied by $g_c$.

This perspective also clarifies the difference between decision-level and data-level corrections. SMOTE does not recover the true minority-class distribution. Rather, it replaces the empirical measure by a mixture of the original sample and a synthetic
distribution supported on line segments connecting neighboring minority observations.
As the oversampling level increases, this synthetic component may progressively
distort the geometry of the minority class, especially near class boundaries, in
non-convex regions, or in the presence of outliers. The capacity-constrained approach, by contrast, operates at the decision level: it modifies the threshold at which a fixed score function is applied, leaving the underlying model unchanged. The correction is therefore transparent, reversible, and grounded directly in the operational constraint.

Finally, it is worth noting that the proposed classifier with $b=1$ coincides with the Bayes-optimal classifier not only when the classes are clearly separable, but also whenever the capacity constraint is already satisfied by $g_c$. Figure~\ref{comparisonT} illustrates this point for the one-dimensional example described below.

\medskip
\subsubsection{A one-dimensional illustration.}
	\begin{figure}[h]
		\centering
		\includegraphics[width=0.95\textwidth]{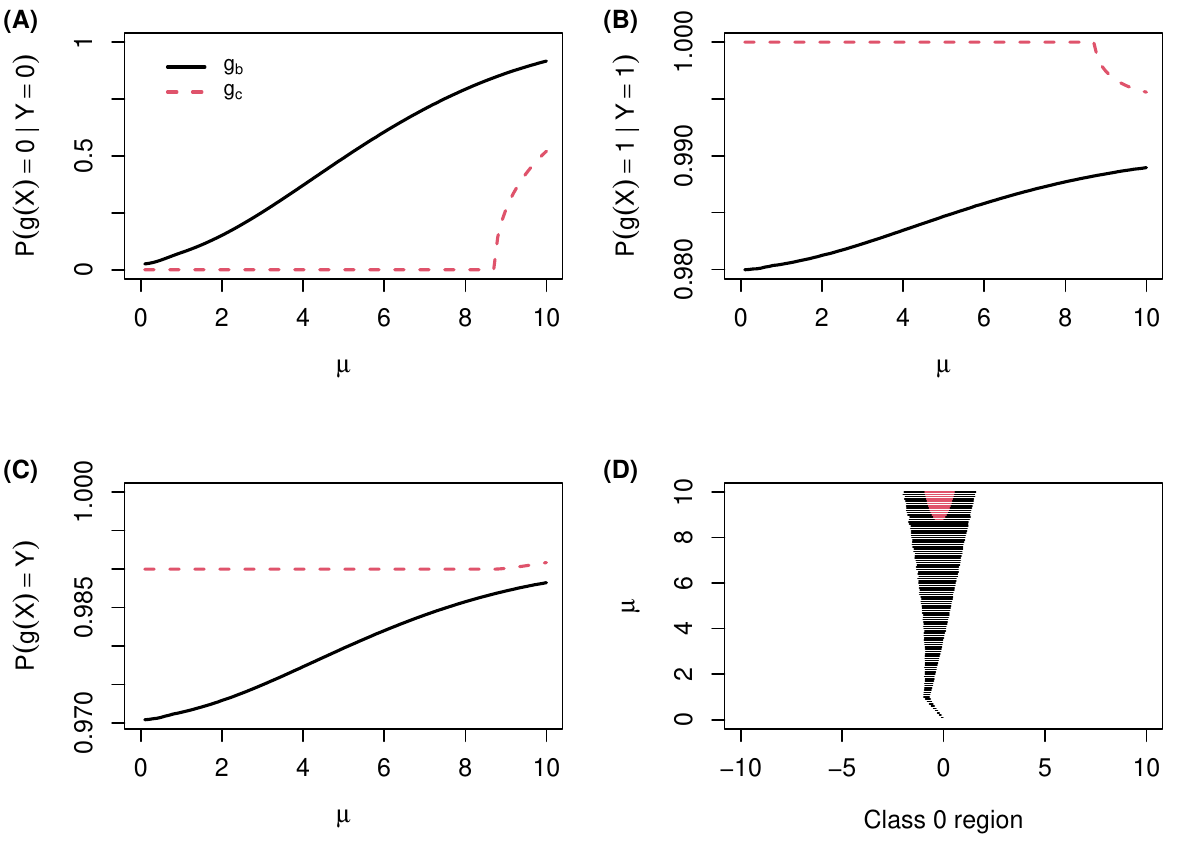}
		\caption{Comparison between the classical optimal classification rule (which maximizes $\Pr{g(X)=Y}$) and our proposed procedure for $b=2$, and $\pi_0=0.01$. Results for the classical optimal rule are shown in red, and results for our method in black. Panels: (A) $\Pr{g(X)=0 \mid Y=0}$, (B) $\Pr{g(X)=1 \mid Y=1}$, and (C) $\Pr{g(X)=Y}$, all as functions of $\mu$. (D) Comparison of the regions of $x$ where $g(x)=0$ under the proposed (black) and classical (red) classifiers; the classical region partially overlaps and covers the proposed region.
		}
		\label{comparisonT}
	\end{figure}

To build intuition, consider $d = 1$ with $F_0(x) = \Phi(x)$, the standard Normal distribution, and $f_1(x) = \{\pi(1 + (x-\mu)^2)\}^{-1}$, a Cauchy distribution shifted by $\mu > 0$. Under severe class imbalance (e.g., $\pi_0 = 0.01$), the Bayes classifier assigns all observations to
class 1 for $\mu < \mu_c \approx 8.721$, since the likelihood ratio never exceeds $\pi_1/\pi_0$ over a set of positive probability. The capacity-constrained classifier $g_b$, by contrast, always identifies a non-empty region $A_{\gamma^\star}$ as class 0, yielding strictly positive sensitivity for all values of $\mu$. Figure~\ref{comparisonT} compares the two classifiers for $b = 2$ and $\pi_0 = 0.01$ across a range of $\mu$ values. Panel~(A) shows that $g_b$ achieves substantially higher sensitivity than $g_c$ throughout, while Panel~(C) shows that overall accuracy under $g_b$ is a strictly increasing function of $\mu$, in contrast to the non-monotone behavior of $g_c$. Panel~(D) illustrates how the class-0 region $A_{\gamma^\star}$ expands as $\mu$ increases and the two classes become more separable.
 
\subsection{A performance metric for capacity-constrained classification}
\label{subsec:metric}
 
In unconstrained classification, overall accuracy $\mathbb{P}(g(X) = Y)$ is a natural summary of performance, though many alternatives exist. Under a capacity constraint, the objective is unambiguous: detect as many minority-class instances as possible subject to
$\mathbb{P}(g(X) = 0) \leq b\pi_0$. This suggests that sensitivity
$\mathbb{P}(g(X) = 0 \mid Y = 0)$ is the primary performance criterion, and $g_b$ is optimal under it by construction.
 
However, a subtlety arises when comparing classifiers that do not all satisfy the constraint. If a classifier $g$ satisfies $\mathbb{P}(g(X) = 0) > b\pi_0$, then not all of its positive
predictions can be acted upon. Under the natural assumption that the subset of flagged observations that can be processed is selected at random, only a fraction
$$
    w := \frac{b\pi_0}{\mathbb{P}(g(X) = 0)}
$$
of the flags are processed, and the effective detection rate is reduced proportionally. This
motivates the following metric.
 
\begin{definition}
\label{def:M}
The capacity-adjusted detection rate of a classifier $g$ is
\[
    M(g) := \mathbb{P}(g(X) = 0 \mid Y = 0) \times
    \begin{cases}
        1 & \text{if } \mathbb{P}(g(X) = 0) \leq b\pi_0, \\[4pt]
        \dfrac{b\pi_0}{\mathbb{P}(g(X) = 0)} & \text{if } \mathbb{P}(g(X) = 0) > b\pi_0.
    \end{cases}
\]
\end{definition}
 
By construction, $g_b$ maximizes $M$ over all classifiers in $\mathcal{C}$. The metric $M$
differs from sensitivity in that it penalizes classifiers that violate the constraint, and it
differs from accuracy in that it focuses entirely on minority-class detection. Standard metrics
such as AUC or F1 score do not respect the capacity constraint and are therefore not appropriate for comparing classifiers in this setting.
 
In practice, $M$ is estimated by its empirical counterpart $\hat{M}$, obtained by replacing population probabilities with their empirical estimates on a held-out test sample. All empirical
comparisons in Section~\ref{sec:empirical} are reported in terms of $\hat{M}$.
 
\section{Statistical Consistency and Empirical Evaluation}
\label{sec:consistency}

We now turn to the question of whether the empirical capacity-constrained classifiers introduced
in Section~\ref{sec:theory} converge to their population-level counterparts as the sample size grows. We establish consistency for five families of learning procedures: kernel density estimators, $k$-nearest neighbors, support vector machines, random forests, and neural networks.

The section is organized as follows. We begin with a general result
(Section~\ref{subsec:general}) showing that imposing the capacity constraint reduces the effective complexity of the classifier class, so that any consistent procedure over $\mathcal{C}$ remains consistent over $\mathcal{C}_b$. We then treat each learning procedure in turn (Sections~\ref{subsec:kernel}--\ref{subsec:nn}), providing full proofs in each case. The general
result furnishes the main argument for neural networks and ERM-based procedures, while the remaining proofs exploit the specific structure of each method.

Throughout, we assume an i.i.d.\ sample $\mathcal{D}_n = \{(X_1, Y_1), \ldots, (X_n, Y_n)\}$ drawn from the same distribution as $(X, Y)$. The sample is split into three disjoint parts: a training set $A_1$ of size $n_1$, a calibration set $A_2$ of size $n_2$ used to estimate the
threshold $\gamma^\star$, and a test set $B$ of size $n_3$, with $n = n_1 + n_2 + n_3$. The empirical loss on the training set is
$$
    \hat{L}_n(g) = 1 - \frac{1}{|S_{\mathrm{train}}|}
    \sum_{j \in S_{\mathrm{train}}} \mathbf{1}_{\{g(X_j)=0,\, Y_j=0\}},
$$
and the empirical capacity-constrained classifier is
$$
    \hat{g}_b = \operatorname*{arg\,min}_{g \in \mathcal{C}_b} \hat{L}_n(g).
$$

\subsection{A general complexity reduction result}
\label{subsec:general}

The key observation underlying all consistency results in this section is that restricting to $\mathcal{C}_b$ does not increase the complexity of the classifier class. This is formalized through the VC dimension and shatter coefficients.

Recall that the shatter coefficient of a class $\mathcal{C}$ is
$$
    S(\mathcal{C}, n) = \max_{x_1, \ldots, x_n \in \mathbb{R}^d}
    \bigl|\{(g(x_1), \ldots, g(x_n)) : g \in \mathcal{C}\}\bigr|,
$$
and the VC dimension is $\mathrm{VCdim}(\mathcal{C}) = \max\{n : S(\mathcal{C}, n) = 2^n\}$.

\begin{theorem}
\label{thm:vc}
$S(\mathcal{C}_b, n) \leq S(\mathcal{C}, n)$ for all $n$. In particular,
$\mathrm{VCdim}(\mathcal{C}_b) \leq \mathrm{VCdim}(\mathcal{C})$.
\end{theorem}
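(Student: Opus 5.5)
The plan is a direct monotonicity argument: $\mathcal{C}_b$ is by definition a subset of $\mathcal{C}$, so its shatter coefficients cannot exceed those of $\mathcal{C}$. Concretely, I fix $n$ and points $x_1,\ldots,x_n\in\mathbb{R}^d$. The key observation is the inclusion of projection sets
$$
\bigl\{(g(x_1),\ldots,g(x_n)) : g\in\mathcal{C}_b\bigr\}\subseteq \bigl\{(g(x_1),\ldots,g(x_n)) : g\in\mathcal{C}\bigr\},
$$
which holds because every $g\in\mathcal{C}_b$ also belongs to $\mathcal{C}$. Taking cardinalities and then the maximum over $(x_1,\ldots,x_n)$ on both sides gives $S(\mathcal{C}_b,n)\le S(\mathcal{C},n)$. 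This step uses nothing about the constraint $\mathbb{P}(g(X)=0)\le b\pi_0$ beyond the fact that it defines a subclass.

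For the VC statement, I take $n=\mathrm{VCdim}(\mathcal{C}_b)$, so that $S(\mathcal{C}_b,n)=2^n$. The previous inequality gives $2^n\le S(\mathcal{C},n)$. Since a shatter coefficient is always at most $2^n$, this forces $S(\mathcal{C},n)=2^n$, and hence $\mathrm{VCdim}(\mathcal{C})\ge n$. If $\mathrm{VCdim}(\mathcal{C}_b)$ is infinite, the same reasoning applies for every $n$, so $\mathrm{VCdim}(\mathcal{C})$ is infinite as well and the inequality holds trivially.

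I do not expect a real obstacle; the only care needed is bookkeeping. First, $\mathcal{C}_b$ could be empty only if no measurable $g$ satisfied the constraint, but the constant classifier $g\equiv 1$ always lies in $\mathcal{C}_b$, so the sets being compared are nonempty. Second, the theorem as written uses the full measurable class $\mathcal{C}$, where both sides are $2^n$ and the statement is trivial. The meaningful use is with $\mathcal{C}$ replaced by a restricted model class $\mathcal{G}$ (for example, the rules produced by a learning method) and $\mathcal{C}_b$ by $\mathcal{G}\cap\mathcal{C}_b$. The identical subset argument covers that case, and I would note this so the result can be applied in the later consistency sections.
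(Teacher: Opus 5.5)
Your proof is correct and matches the paper's main argument: $\mathcal{C}_b\subseteq\mathcal{C}$ gives inclusion of the sets of realizable labelings, hence $S(\mathcal{C}_b,n)\le S(\mathcal{C},n)$, and the VC bound then follows from the definition, a step you spell out more carefully than the paper does. Your remark that the inequality only has content for a restricted class $\mathcal{G}\cap\mathcal{C}_b$ fits how the paper actually applies it to $\mathcal{C}_k\cap\mathcal{C}_b$ in the neural-network corollary, and the paper's additional paragraph on ``error sets'' is not needed for the inequality.
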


\begin{proof}
Since $\mathcal{C}_b \subseteq \mathcal{C}$, the set of labelings of any finite sample achievable by classifiers in $\mathcal{C}_b$ is a subset of those achievable by classifiers in $\mathcal{C}$. Therefore $S(\mathcal{C}_b, n) \leq S(\mathcal{C}, n)$ for all $n$, and the inequality on VC dimensions follows immediately from the definition.

To see the strict mechanism, observe that the family of error sets associated with $\mathcal{C}_b$,
$$
    \{(x,y) \in \mathbb{R}^d \times \{0,1\} : g(x) \neq 0,\, y = 0\},
    \quad g \in \mathcal{C}_b,
$$
is a strict subset of the family of error sets associated with $\mathcal{C}$,
$$
    \{(x,y) \in \mathbb{R}^d \times \{0,1\} : g(x) \neq y\},
    \quad g \in \mathcal{C}.
$$
The shatter coefficient of the first family is therefore no larger than that of the second.
\end{proof}

\begin{corollary}
\label{cor:erm}
If $\mathrm{VCdim}(\mathcal{C}) < \infty$, then
\[
    \sup_{g \in \mathcal{C}_b} \bigl|\hat{L}_n(g) - L(g)\bigr|
    \leq \sup_{g \in \mathcal{C}} \bigl|\hat{L}_n(g) - L(g)\bigr|
    \xrightarrow{a.s.} 0.
\]
In particular, if $g_b$ is the minimizer of $L$ over $\mathcal{C}_b$ and $\hat{g}_b$ is the empirical minimizer, then $L(\hat{g}_b) \to L(g_b)$ almost surely.
\end{corollary}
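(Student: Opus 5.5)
The plan has three parts: a trivial supremum comparison, a uniform law of large numbers for the class $\mathcal{C}$, and a standard ERM argument that handles the constraint $\mathcal{C}_b$. Throughout, $\mathcal{C}$ is a fixed class of finite VC dimension, not the class of all measurable rules. The loss is $L(g)=1-\mathbb{P}(g(X)=0,Y=0)$, and $\hat L_n$ is its empirical counterpart computed on the training sample.

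\textbf{Supremum comparison.} Since $\mathcal{C}_b\subseteq\mathcal{C}$, a supremum over a subset is at most the supremum over the whole class. This gives the displayed inequality directly.

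\textbf{Uniform convergence over $\mathcal{C}$.} Write $L(g)-\hat L_n(g)=\hat\nu_n(E_g)-\nu(E_g)$, where $\nu$ is the joint law of $(X,Y)$, $\hat\nu_n$ is the empirical measure, and
\[
E_g=\{(x,y):g(x)=0,\ y=0\}=\{x:g(x)=0\}\times\{0\}.
\]
On any sample, the trace of $\{E_g:g\in\mathcal{C}\}$ is determined by the labels $g$ assigns to the points with $y=0$. Hence the shatter coefficient of $\{E_g\}$ is at most $S(\mathcal{C},n)$. I will then apply the Vapnik--Chervonenkis inequality
\[
\mathbb{P}\Bigl(\sup_g|\hat\nu_n(E_g)-\nu(E_g)|>\varepsilon\Bigr)\le 8S(\mathcal{C},n)e^{-n\varepsilon^2/32}.
\]
Sauer's lemma gives $S(\mathcal{C},n)\le (n+1)^{V}$ with $V=\mathrm{VCdim}(\mathcal{C})$, so the right-hand side is summable in $n$. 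Borel--Cantelli then yields almost sure convergence. Theorem~\ref{thm:vc} is what lets the same bound transfer to $\mathcal{C}_b$.

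\textbf{Loss convergence for the minimizer.} This is where the main obstacle lies. The class $\mathcal{C}_b$ is defined through the population constraint $\mathbb{P}(g(X)=0)\le b\pi_0$. If $\hat g_b$ really minimizes $\hat L_n$ over $\mathcal{C}_b$ as stated, the usual sandwich argument works. Let $\Delta_n=\sup_{\mathcal{C}_b}|\hat L_n-L|$. Then
\[
L(\hat g_b)-L(g_b)\le \bigl[L(\hat g_b)-\hat L_n(\hat g_b)\bigr]+\bigl[\hat L_n(g_b)-L(g_b)\bigr]\le 2\Delta_n\to0,
\]
where the middle term $\hat L_n(\hat g_b)-\hat L_n(g_b)\le0$ has been dropped. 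Also $L(\hat g_b)\ge L(g_b)$ because $\hat g_b\in\mathcal{C}_b$. If instead one wanted a version in which the constraint is estimated empirically, I would first show that $\sup_g|\hat P_n(g=0)-P(g=0)|\to0$ by the same VC argument. I would then compare against a slightly relaxed budget $b\pi_0\pm\varepsilon$ and use right-continuity of $b\mapsto L(g_b)$, which holds when the likelihood ratio has no atom at $\gamma^\star$. As stated, though, the clean argument above suffices.

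A minimizer over $\mathcal{C}_b$ need not exist, since the infimum may not be attained. I would handle this by taking $g_b$ to be $\eta$-optimal, so that $L(g_b)\le\inf_{\mathcal{C}_b}L+\eta$ with $\eta\to0$. The same bound then carries through.
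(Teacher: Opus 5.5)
Your proposal is correct and follows the same route as the paper. It uses the inclusion $\mathcal{C}_b\subseteq\mathcal{C}$ for the first inequality, the VC inequality with Borel--Cantelli for almost sure uniform convergence, and the standard three-term decomposition with $\hat L_n(\hat g_b)\le \hat L_n(g_b)$. Your explicit check that the loss sets $\{g=0\}\times\{0\}$ have shatter coefficient at most $S(\mathcal{C},n)$, combined with Sauer's lemma for summability, fills in a step the paper leaves implicit.
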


\begin{proof}
The first inequality follows directly from Theorem~\ref{thm:vc}. The almost sure convergence of the supremum follows from the VC inequality (Vapnik \& Chervonenkis, 1971) together with the
Borel--Cantelli lemma, since $\mathrm{VCdim}(\mathcal{C}_b) \leq \mathrm{VCdim}(\mathcal{C}) < \infty$. The convergence of $L(\hat{g}_b)$ to $L(g_b)$ then follows from the standard
decomposition
$$
    L(\hat{g}_b) - L(g_b)
    \leq L(\hat{g}_b) - \hat{L}_n(\hat{g}_b)
       + \hat{L}_n(\hat{g}_b) - \hat{L}_n(g_b)
       + \hat{L}_n(g_b) - L(g_b)
    \leq 2 \sup_{g \in \mathcal{C}_b} |\hat{L}_n(g) - L(g)|,
$$
where the second inequality uses the fact that $\hat{g}_b$ minimizes $\hat{L}_n$ over
$\mathcal{C}_b$, so $\hat{L}_n(\hat{g}_b) \leq \hat{L}_n(g_b)$.
\end{proof}

The following auxiliary results are used repeatedly in the proofs below. The first shows that consistency of the estimated threshold $\hat\gamma^\star$ follows from uniform convergence of the score functions.

\begin{lemma}
\label{lem:sets}
Let $h_n(x) = f_n(x) - \gamma_n g_n(x)$ and $h(x) = f(x) - \gamma g(x)$, where $f_n, g_n, f,
g$ are densities on $\mathbb{R}^d$. Define
$$
    E_n = \{x : f_n(x) > \gamma_n g_n(x)\},
    \qquad
    E = \{x : f(x) > \gamma g(x)\}.
$$
If $f_n \to f$ and $g_n \to g$ in measure, $\gamma_n \to \gamma$ in probability, and
$\mu(\{x : f(x) = \gamma g(x)\}) = 0$ for some probability measure $\mu$, then
$\mu(E_n \,\Delta\, E) \to 0$.
\end{lemma}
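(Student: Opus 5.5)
The argument will use the boundary condition to split space into a part near the boundary, which has small $\mu$-mass, and a part away from it, where the estimated sign agrees with the true sign with high probability. One point needs fixing first. "Convergence in measure" must be read with respect to $\mu$ itself, or to a measure dominating $\mu$ on the relevant sets. Since $\mu(E_n\Delta E)$ is random, the conclusion will be convergence in probability, and almost sure convergence when all hypotheses hold almost surely. I also assume $\mu$ is finite, which holds since it is a probability measure.

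\textbf{Step 1 (decomposition).} Fix $\varepsilon>0$ and set $D_\delta=\{x: |h(x)|\le \delta\}$. Since $D_\delta \downarrow \{h=0\}$ as $\delta\downarrow 0$, continuity from above of the finite measure $\mu$ and the hypothesis $\mu(\{f=\gamma g\})=0$ give a $\delta$ with $\mu(D_\delta)<\varepsilon$. Any $x\in E_n\Delta E$ has $h_n(x)$ and $h(x)$ of different (weak) signs, so $|h_n(x)-h(x)|\ge |h(x)|$. Hence
\[
E_n\Delta E\subseteq D_\delta\cup\{x:|h_n(x)-h(x)|>\delta\}.
\]

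\textbf{Step 2 (controlling $h_n-h$).} Write $h_n-h=(f_n-f)-\gamma_n(g_n-g)-(\gamma_n-\gamma)g$. Then
\[
\{|h_n-h|>\delta\}\subseteq\{|f_n-f|>\delta/3\}\cup\{|\gamma_n|\,|g_n-g|>\delta/3\}\cup\{|\gamma_n-\gamma|\,g>\delta/3\}.
\]
\begin{itemize}
\item The first set has vanishing $\mu$-measure by the convergence-in-measure hypothesis.
\item For the second, on the event $|\gamma_n|\le|\gamma|+1$, whose probability tends to one, the set lies inside $\{|g_n-g|>\delta/(3(|\gamma|+1))\}$, which again has vanishing measure.
\item For the third, $g$ is a density and hence finite, so there is $M$ with $\mu(\{g>M\})<\varepsilon$. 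On the event $|\gamma_n-\gamma|<\delta/(3M)$, the set is contained in $\{g>M\}$.
\end{itemize}

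\textbf{Step 3 (conclusion).} Combining the two steps, $\mu(E_n\Delta E)\le 2\varepsilon+o(1)$ on an event of probability tending to one. Since $\varepsilon$ was arbitrary, $\mu(E_n\Delta E)\to 0$ in probability.

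\textbf{Main obstacle.} The hard part is bookkeeping the two sources of randomness: the functions may be random estimators, and $\gamma_n$ converges only in probability. I would handle this as in Step 2, working on the high-probability events and bounding $\mu$-measures there. A secondary point is the choice of $M$ in the third bullet. It requires $g<\infty$ $\mu$-a.e., which holds because a density is finite almost everywhere with respect to Lebesgue measure, provided $\mu$ is absolutely continuous (for example $\mu=F_0$ or $F_1$). I would state that assumption explicitly.
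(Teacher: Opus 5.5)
Your proof is correct, but it takes a different and more careful route than the paper. The paper's argument runs as follows: $h_n\to h$ in probability; then $\mathbf{1}_{E_n}\to\mathbf{1}_E$ holds $\mu$-almost surely; then dominated convergence finishes. As written, the middle step does not follow. Convergence in measure, together with $\gamma_n\to\gamma$ only in probability, yields pointwise a.e.\ convergence only along subsequences. So the paper's argument needs a subsequence-of-subsequence repair, and it also leaves implicit how the product $\gamma_n g_n$ is controlled.

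Your margin decomposition $E_n\,\Delta\,E\subseteq\{|h|\le\delta\}\cup\{|h_n-h|>\delta\}$ avoids pointwise convergence entirely. It uses the null-boundary hypothesis only through $\mu(\{|h|\le\delta\})\to 0$. It handles the random threshold explicitly on high-probability events and controls $\gamma_n g_n$ via the split $(\gamma_n-\gamma)g+\gamma_n(g_n-g)$. It also delivers the correct mode of convergence: in probability, since $\mu(E_n\,\Delta\,E)$ is random. The paper's route is shorter and is the natural one once a.e.\ convergence is available. Yours is self-contained and gives an explicit bound that could be turned into rates.

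Two small remarks. First, your requirement that ``in measure'' refer to $\mu$, or to a measure $\lambda$ with $\mu$ absolutely continuous with respect to $\lambda$, is exactly what is needed. In the paper's application it holds: $L_1$ consistency of the kernel estimates gives convergence in Lebesgue measure by Markov's inequality. Moreover, for the finite, absolutely continuous marginal $\mu$, $\lambda(A_n)\to 0$ implies $\mu(A_n)\to 0$. Second, absolute continuity is not needed for choosing $M$ in your third bullet. For any real-valued $g$ and any finite $\mu$, $\mu(\{g>M\})\downarrow 0$ as $M\to\infty$ by continuity from above.
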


\begin{proof}
Since $f_n \to f$ and $g_n \to g$ in measure and $\gamma_n \to \gamma$ in probability, $h_n =
f_n - \gamma_n g_n$ converges in probability to $h = f - \gamma g$. By the hypothesis that
$\mu(\{x : h(x) = 0\}) = 0$, the function $h$ is nonzero $\mu$-almost everywhere. Therefore
$\mathbf{1}_{E_n}(x) = \mathbf{1}_{\{h_n(x) > 0\}}$ converges $\mu$-almost surely to
$\mathbf{1}_E(x) = \mathbf{1}_{\{h(x) > 0\}}$. By dominated convergence,
$$
    \mu(E_n \,\Delta\, E)
    = \int |\mathbf{1}_{E_n} - \mathbf{1}_E|\,d\mu \to 0.
$$
\end{proof}

\begin{proposition}
\label{thm:threshold}
Let $\Gamma = [\underline\gamma, \bar\gamma] \subset \mathbb{R}$ be a compact interval. Let
$h : \Gamma \to \mathbb{R}$ be continuous and let $\hat{h} : \Gamma \to \mathbb{R}$ be a
sequence of random functions satisfying
\[
    \sup_{\gamma \in \Gamma} |\hat{h}(\gamma) - h(\gamma)|
    \xrightarrow{\,P\,} 0.
\]
Define feasible sets $F = \{\gamma \in \Gamma : h(\gamma) \geq 0\}$ and
$\hat{F} = \{\gamma \in \Gamma : \hat{h}(\gamma) \geq 0\}$.
Assume:
\begin{itemize}
    \item \bf A1 \rm $F \neq \emptyset$, and $\hat{F} \neq \emptyset$ with probability tending to one.
    \item \bf A2 \rm There exists a unique $\gamma_0 \in \Gamma$ such that $\gamma_0 = \min F$, and for
    every $\varepsilon > 0$,
    \[
        \sup_{\gamma \leq \gamma_0 - \varepsilon} h(\gamma) < 0,
        \qquad
        \inf_{\gamma \geq \gamma_0 + \varepsilon} h(\gamma) > 0.
    \]
\end{itemize}
Define $\hat\gamma = \min \hat{F}$. Then $\hat\gamma \xrightarrow{P} \gamma_0$.
\end{proposition}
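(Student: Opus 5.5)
The plan is the standard argmin-type consistency argument. We fix $\varepsilon>0$ and show that $\mathbb{P}(\hat\gamma<\gamma_0-\varepsilon)\to 0$ and $\mathbb{P}(\hat\gamma>\gamma_0+\varepsilon)\to 0$ separately. Write $\Delta_n=\sup_{\gamma\in\Gamma}|\hat h(\gamma)-h(\gamma)|$, so $\Delta_n\xrightarrow{P}0$ by hypothesis. Set
$$\delta_-=-\sup_{\gamma\le\gamma_0-\varepsilon}h(\gamma)>0,\qquad \delta_+=\inf_{\gamma\ge\gamma_0+\varepsilon}h(\gamma)>0,$$
both strictly positive by A2, and let $\delta=\min(\delta_-,\delta_+)$. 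If the set $\{\gamma\in\Gamma:\gamma\le\gamma_0-\varepsilon\}$ is empty, the left-hand case is vacuous; likewise on the right.

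\textbf{Lower deviation.} On the event $\{\Delta_n<\delta\}$, every $\gamma\in\Gamma$ with $\gamma\le\gamma_0-\varepsilon$ satisfies
$$\hat h(\gamma)\le h(\gamma)+\Delta_n<-\delta_-+\delta\le 0.$$
Hence no such $\gamma$ lies in $\hat F$. Whenever $\hat F\neq\emptyset$, this gives $\hat\gamma=\min\hat F>\gamma_0-\varepsilon$.

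\textbf{Upper deviation.} On the same event, every $\gamma\ge\gamma_0+\varepsilon$ satisfies $\hat h(\gamma)>\delta_+-\delta\ge 0$. Suppose $\gamma_0+\varepsilon\le\bar\gamma$. Then the point $\gamma_0+\varepsilon$ belongs to $\hat F$, so $\hat\gamma\le\gamma_0+\varepsilon$. If instead $\gamma_0+\varepsilon>\bar\gamma$, the bound $\hat\gamma\le\bar\gamma<\gamma_0+\varepsilon$ holds trivially whenever $\hat F\ne\emptyset$.

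\textbf{Combining.} Putting the two cases together,
$$\mathbb{P}(|\hat\gamma-\gamma_0|>\varepsilon)\le\mathbb{P}(\Delta_n\ge\delta)+\mathbb{P}(\hat F=\emptyset).$$
The first term tends to zero by the uniform convergence hypothesis, and the second tends to zero by A1. This gives $\hat\gamma\xrightarrow{P}\gamma_0$.

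\textbf{Well-definedness of the minimum.} Before running this argument we must check that $\min\hat F$ exists. Otherwise $\hat\gamma$ should be read as $\inf\hat F$, and the argument above applies verbatim, since it only bounds the location of points of $\hat F$. For $h$, the set $F$ is closed in the compact interval $\Gamma$ because $h$ is continuous, so $\min F$ exists. For $\hat h$, which need not be continuous, we would adopt the infimum convention.

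\textbf{Main obstacle.} The only delicate point is making sure the strict separation in A2 is what prevents $\hat F$ from containing spurious points far below $\gamma_0$. Continuity of $h$ alone would not suffice, because $h$ could approach $0$ from below away from $\gamma_0$. Using the uniform margin $\delta_-$ handles exactly this.
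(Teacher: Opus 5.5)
Your proof is correct and follows the paper's own argument: the margins $\delta_\pm$ from A2, the high-probability event on which $\sup_\Gamma|\hat h-h|$ stays below the margin, and the resulting exclusion of every $\gamma\le\gamma_0-\varepsilon$ from $\hat F$ and inclusion of every $\gamma\ge\gamma_0+\varepsilon$ in it; your treatment of the case $\gamma_0+\varepsilon>\bar\gamma$, of the event $\hat F=\emptyset$ via A1, and of $\min$ versus $\inf$ adds care that the paper leaves implicit. One correction to your closing remark: the lower margin is automatic, because $h<0$ on the compact set $\Gamma\cap(-\infty,\gamma_0-\varepsilon]$ by minimality of $\gamma_0$ and a continuous function attains its maximum there, so it is the upper condition $\inf_{\gamma\ge\gamma_0+\varepsilon}h(\gamma)>0$ that carries the real content of A2.
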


\begin{proof}
Fix $\varepsilon > 0$. By assumption (A2), define
\[
    a_\varepsilon = -\sup_{\gamma \leq \gamma_0 - \varepsilon} h(\gamma) > 0,
    \qquad
    b_\varepsilon = \inf_{\gamma \geq \gamma_0 + \varepsilon} h(\gamma) > 0,
    \qquad
    \delta_\varepsilon = \tfrac{1}{2}\min\{a_\varepsilon, b_\varepsilon\}.
\]
Consider the event $\mathcal{E}_n(\varepsilon) = \{\sup_{\gamma \in \Gamma} |\hat{h}(\gamma) -
h(\gamma)| \leq \delta_\varepsilon\}$, which satisfies $\mathbb{P}(\mathcal{E}_n(\varepsilon))
\to 1$ by hypothesis.

On $\mathcal{E}_n(\varepsilon)$, for any $\gamma \leq \gamma_0 - \varepsilon$:
\[
    \hat{h}(\gamma) \leq h(\gamma) + \delta_\varepsilon \leq -a_\varepsilon + \delta_\varepsilon
    \leq -\delta_\varepsilon < 0,
\]
so $\gamma \notin \hat{F}$, which gives $\hat\gamma \geq \gamma_0 - \varepsilon$.

For any $\gamma \geq \gamma_0 + \varepsilon$:
\[
    \hat{h}(\gamma) \geq h(\gamma) - \delta_\varepsilon \geq b_\varepsilon - \delta_\varepsilon
    \geq \delta_\varepsilon > 0,
\]
so $\gamma \in \hat{F}$, which gives $\hat\gamma \leq \gamma_0 + \varepsilon$.

Hence on $\mathcal{E}_n(\varepsilon)$, $|\hat\gamma - \gamma_0| \leq \varepsilon$, and
therefore $\mathbb{P}(|\hat\gamma - \gamma_0| > \varepsilon) \leq 1 -
\mathbb{P}(\mathcal{E}_n(\varepsilon)) \to 0$.
\end{proof}

\subsection{Kernel density-based classifiers}
\label{subsec:kernel}

We assume that sufficiently many observations are available in each class. Although the classes
may be severely imbalanced (e.g., $\pi_0 = 0.01$ or even $0.001$), the absolute sample sizes
remain large enough for reliable density estimation~--- a realistic assumption in applications
where data accumulate at high volume.

From the training sample $A_1$, we construct kernel density estimators for each class:
\[
    \hat{f}_k(x) = \frac{1}{n_{1,k}\, h^d}
    \sum_{i=1}^{n_1} \mathbf{1}_{\{Y_i = k\}} K\!\left(\frac{x - X_i}{h}\right),
    \quad k = 0, 1,
\]
where $n_{1,k} = \sum_{i=1}^{n_1} \mathbf{1}_{\{Y_i = k\}}$ and $K$ is a regular kernel. These
estimators satisfy the conditions of Theorem~10.1 in \citet{DGL} for $L_1$ consistency:
$\int |\hat{f}_k(x) - f_k(x)|\,dx \to 0$ almost surely as $n_{1,k} \to \infty$, provided the
bandwidth $h = h_n$ satisfies $h_n \to 0$ and $n_{1,k} h_n^d \to \infty$.

From the calibration sample $A_2$, we estimate the threshold $\gamma^\star$ as follows. For each
$\gamma$, define $H_\gamma = \{x \in A_2 : \hat{f}_0(x) > \gamma \hat{f}_1(x)\}$ and
\[
    \hat{P}_\gamma = \frac{1}{n_2} \sum_{j=n_1+1}^{n_1+n_2} \mathbf{1}_{H_\gamma}(X_j).
\]
The estimated threshold is $\hat\gamma^\star = \arg\min_\gamma \{\hat{P}_\gamma - b\pi_0 \geq
0\}$, and the classifier is
\[
    \hat{g}_b(x) =
    \begin{cases}
        0 & \text{if } \hat{f}_0(x) > \hat\gamma^\star \hat{f}_1(x), \\
        1 & \text{otherwise.}
    \end{cases}
\]

\begin{theorem}
\label{thm:kernel_consistency}
Under the bandwidth conditions above, and provided $\mu(\{x : f_0(x) = \gamma^\star f_1(x)\}) =
0$ where $\mu$ denotes the marginal distribution of $X$, the kernel-based capacity-constrained
classifier is consistent:
\[
    \mathbb{P}(\hat{g}_b(X) \neq g_b(X)) \to 0 \quad \text{as } n \to \infty.
\]
\end{theorem}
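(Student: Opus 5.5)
The event $\{\hat g_b(X)\neq g_b(X)\}$ is exactly $\{X\in E_n\,\Delta\,E\}$, where $E_n=\{x:\hat f_0(x)>\hat\gamma^\star\hat f_1(x)\}$ and $E=A_{\gamma^\star}=\{x:f_0(x)>\gamma^\star f_1(x)\}$. Hence
$$\mathbb{P}(\hat g_b(X)\neq g_b(X))=\mathbb{E}\,\mu(E_n\,\Delta\,E),$$
with the expectation taken over the training and calibration samples. The plan is to apply Lemma~\ref{lem:sets} with $f_n=\hat f_0$, $g_n=\hat f_1$, $\gamma_n=\hat\gamma^\star$, and to use the null-boundary hypothesis $\mu(\{f_0=\gamma^\star f_1\})=0$. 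Since $\mu(E_n\Delta E)\le 1$, convergence in probability then gives convergence of the expectation by bounded convergence. Two inputs must be checked.

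\emph{Convergence in measure of the density estimators.} The $L_1$ consistency $\int|\hat f_k-f_k|\to 0$ a.s.\ (Theorem~10.1 of \citet{DGL}) holds under $h_n\to0$ and $n_{1,k}h_n^d\to\infty$. Both $n_{1,k}\to\infty$ a.s.\ by the strong law, since $\pi_0>0$. Convergence in measure is needed with respect to $\mu=\pi_0F_0+\pi_1F_1$, not Lebesgue measure. I would argue along subsequences: $L_1$ convergence yields a further subsequence converging Lebesgue-a.e., and $\mu\ll$ Lebesgue because $\mu$ has density $\pi_0f_0+\pi_1f_1$. Hence every subsequence has a sub-subsequence along which $\hat f_k\to f_k$ $\mu$-a.e., which is convergence in $\mu$-measure.

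\emph{Consistency of $\hat\gamma^\star$.} This is the main obstacle. I would use Proposition~\ref{thm:threshold} with $h(\gamma)=b\pi_0-P_\gamma$. Since $P_\gamma$ is nonincreasing in $\gamma$, the sign convention must be adjusted so that the feasible set is an upper interval; the minimal feasible $\gamma$ in $\{\hat P_\gamma\ge b\pi_0\}$ is handled by the symmetric version of the proposition. The empirical analogue is $\hat h(\gamma)=b\pi_0-\hat P_\gamma$, and we restrict to a compact $\Gamma$ containing $\gamma^\star$ in its interior.

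Uniform convergence splits as
$$\sup_\gamma|\hat P_\gamma-P_\gamma|\le\sup_\gamma|\hat P_\gamma-\tilde P_\gamma|+\sup_\gamma|\tilde P_\gamma-P_\gamma|,\qquad \tilde P_\gamma=\mu(\{\hat f_0>\gamma\hat f_1\}).$$
The first term goes to zero by conditioning on $A_1$. The sets $\{\hat f_0>\gamma\hat f_1\}$, $\gamma\in\Gamma$, are level sets of the single function $\hat f_0/\hat f_1$, so they form a nested family of VC dimension $1$. The DKW/VC inequality therefore applies to the independent calibration sample $A_2$.

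For the second term, $\mu(\{\hat f_0>\gamma\hat f_1\}\Delta\{f_0>\gamma f_1\})\to0$ for each fixed $\gamma$ at which $\mu(f_0=\gamma f_1)=0$, by Lemma~\ref{lem:sets}. This pointwise convergence is upgraded to uniform convergence by a Pólya-type argument, using monotonicity in $\gamma$ and continuity of $P_\gamma$.

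I expect the delicate point to be assumption~A2. It requires $P_\gamma$ to be continuous and strictly decreasing near $\gamma^\star$, that is, no atom of the likelihood ratio near $\gamma^\star$ and positive mass of the ratio in every neighborhood of it. I would state this as an implicit regularity condition, strengthening the stated boundary hypothesis locally if needed. Granting it, Proposition~\ref{thm:threshold} gives $\hat\gamma^\star\to\gamma^\star$ in probability.

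Lemma~\ref{lem:sets} then yields $\mu(E_n\Delta E)\to0$ in probability, and bounded convergence concludes the proof.
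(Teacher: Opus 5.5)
Your proof follows the paper's route: consistency of $\hat\gamma^\star$ via Proposition~\ref{thm:threshold}, then Lemma~\ref{lem:sets} for $\mu(E_n\,\Delta\,E)$, then passage to $\mathbb{P}(\hat g_b(X)\neq g_b(X))=\mathbb{E}\,\mu(E_n\,\Delta\,E)$. The paper's class-by-class Step~3 amounts to the same bound as your bounded-convergence step, and your argument is correct under the same extra regularity on the likelihood ratio (no atoms, and positive mass near $\gamma^\star$) that the paper also invokes in its Step~1. Your additions supply steps the paper only asserts: the corrected sign $h(\gamma)=b\pi_0-P_\gamma$, so that (A2) actually holds; the DKW-plus-approximation split with a P\'olya argument for $\sup_\gamma|\hat P_\gamma-P_\gamma|$; and convergence in $\mu$-measure via $\mu\ll$ Lebesgue.
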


\begin{proof}
We show that both class-conditional error probabilities converge to zero.

\textit{Step 1: convergence of $\hat\gamma^\star$ to $\gamma^\star$.}
Define $h(\gamma) = P_\gamma - b\pi_0$, where $P_\gamma = \mathbb{P}(f_0(X)/f_1(X) > \gamma)$.
By the $L_1$ consistency of $\hat{f}_0$ and $\hat{f}_1$, the empirical version $\hat{P}_\gamma$
converges uniformly in $\gamma$ over any compact interval $\Gamma$ to $P_\gamma$. The function
$h(\gamma)$ is continuous and strictly decreasing in $\gamma$ under the assumption that the
distribution of the likelihood ratio $f_0(X)/f_1(X)$ has no atoms; this guarantees the
uniqueness condition (A2) of Theorem~\ref{thm:threshold}. Therefore $\hat\gamma^\star
\xrightarrow{P} \gamma^\star$ by Theorem~\ref{thm:threshold}.

\textit{Step 2: convergence of the decision regions.}
Define $A_1 = \{(x,y) : f_0(x) > \gamma^\star f_1(x),\, y = 1\}$ and $\hat{A}_{n,1} = \{(x,y)
: \hat{f}_0(x) > \hat\gamma^\star \hat{f}_1(x),\, y = 1\}$, and similarly $A_0$, $\hat{A}_{n,0}$
for class 0. By Lemma~\ref{lem:sets}, applied with $f_n = \hat{f}_0$, $g_n = \hat{f}_1$, and
$\gamma_n = \hat\gamma^\star$, together with the $L_1$ consistency of the kernel estimators and
Step~1, we obtain $\mu(A_k \,\Delta\, \hat{A}_{n,k}) \to 0$ for $k = 0, 1$.

\textit{Step 3: conclusion.}
The misclassification probabilities are
\begin{align*}
    \mathbb{P}(\hat{g}_b(X) \neq g_b(X) \mid Y = 1) &= \int_{C_{n,1}} f_1(x)\,dx, \\
    \mathbb{P}(\hat{g}_b(X) \neq g_b(X) \mid Y = 0) &= \int_{C_{n,0}} f_0(x)\,dx,
\end{align*}
where $C_{n,k} = A_k \,\Delta\, \hat{A}_{n,k}$. Since $\mu(C_{n,k}) \to 0$ and $f_k$ is
integrable, both integrals converge to zero by absolute continuity of the Lebesgue integral.
\end{proof}
To illustrate the behavior of the kernel-based capacity-constrained classifier, we present
a simulation study in $\mathbb{R}^2$. Class 0 is uniformly distributed within a small ellipse,
and class 1 is uniformly distributed within a larger ellipse containing the smaller one
(Figure~\ref{compa2D}, Panel~A). The number of class-0 observations is fixed at $n_0 = 100$,
and the minority class proportion $\pi_0$ varies across experiments.

Panel~B shows $\hat{\mathbb{P}}(\hat{g}(X) = 0 \mid Y = 0)$ as a function of $\pi_0$ for
$\hat{g}_{b=1}$, $\hat{g}_{b=2}$, and the classical rule $\hat{g}_c$. The proposed classifiers
consistently achieve higher detection rates than $\hat{g}_c$, particularly for small and
moderate $\pi_0$. Panel~C shows the corresponding majority-class accuracy
$\hat{\mathbb{P}}(\hat{g}(X) = 1 \mid Y = 1)$, illustrating the fundamental trade-off: improved
minority detection comes at the cost of reduced specificity. Panel~D reports the
capacity-adjusted metric $\hat{M}$. Once capacity is accounted for, the advantage of the
classical Bayes classifier over $\hat{g}_{b=1}$ that appears in Panel~B disappears entirely,
while $\hat{g}_{b=2}$ remains dominant throughout. This confirms that $\hat{M}$ is the
appropriate criterion for comparison in this setting, and that the Bayes classifier is not
optimal under capacity constraints.

\begin{figure}
    \centering
    \includegraphics[width=1.\textwidth]{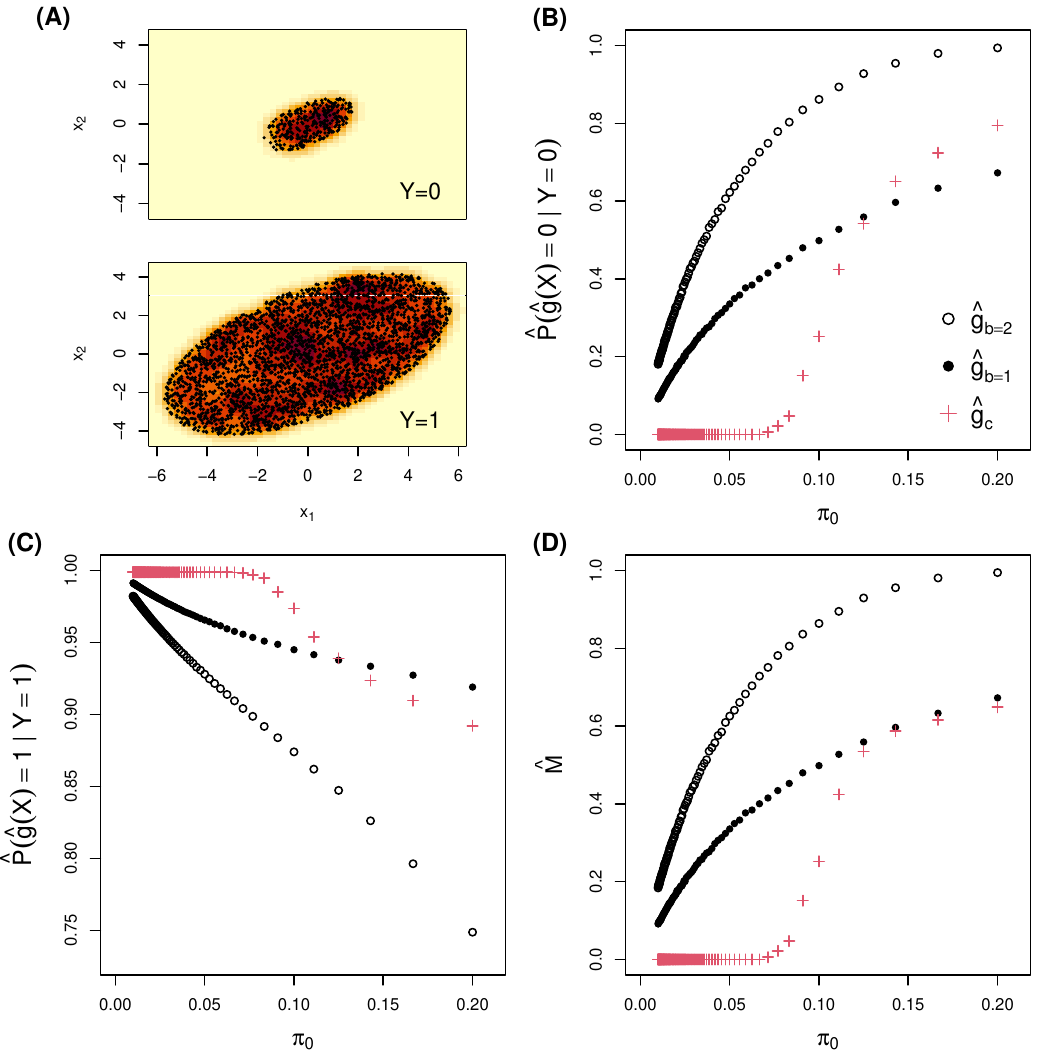}
    \caption{Comparison between the classical optimal classification rule and our proposed
    procedure for $b = 1, 2$, and different values of $\pi_0$. Panels: (A) An example of the
    data corresponding to category $Y=0$ and category $Y=1$; (B)
    $\hat{\mathbb{P}}(\hat{g}(X) = 0 \mid Y = 0)$; (C)
    $\hat{\mathbb{P}}(\hat{g}(X) = 1 \mid Y = 1)$; (D) $\hat{M}$, all as functions of $\pi_0$.
    Results for the classical rule are shown in red; results for the proposed classifiers in
    black.}
    \label{compa2D}
\end{figure}
\subsection{$k$-nearest neighbors}
\label{subsec:knn}

Let $d(\cdot, \cdot)$ be a metric on $\mathbb{R}^p$ and let $\mathcal{N}_k(x)$ denote the set
of the $k$ nearest neighbors of $x$ in the training sample. The capacity-constrained $k$-NN
classifier is obtained by modifying the class weights. For $0 < a_0, a_1 < 1$ with $a_0 + a_1
= 1$, define
$$
    M_j(x) = \frac{1}{k} \sum_{i=1}^n (a_j \mathbf{1}_{\{Y_i = j\}})
    \mathbf{1}_{\{X_i \in \mathcal{N}_k(x)\}}, \quad j = 0, 1,
$$
and classify $x$ as class 0 if $M_0(x) > M_1(x)$, 

		\begin{equation}\label{clasi}
			g_{a_0}(x) =
			\begin{cases}
				0 & \text{if }  M_0(x)> M_1(x),\\
				1 & \text{if }  \text{otherwise}.
			\end{cases}
		\end{equation}
 The weight $a_0$ controls the effective
upweighting of the minority class. Given a calibrating sample of size $r$, define
$$
    \hat{N}_0 = \sum_{i=1}^r \mathbf{1}_{\{g_{a_0}(X_i) = 0\}},
    \qquad
    \hat{N}_1 = r - \hat{N}_0.
$$
The parameter $a_0$ is chosen to maximize $\hat{N}_0$ subject to $\hat{N}_0/r \leq b\pi_0$,
that is,
\[
    \hat{a}_0 = \arg\max \left\{ a_0 \in (0,1) :
    \frac{1}{r}\sum_{i=1}^r \hat{N}_0 \leq b\pi_0 \right\}.
\]

Royall~\cite{royal1966} introduced the concept of the weighted $k$-NN estimator for nonparametric regression and classification, given by

\begin{equation} \label{weighted}
\hat{\eta}_{n}(x)=\sum _{i=1}^{n}w_{ni}(x)Y^{(i)},	
\end{equation}
where $w_{ni}(x)$ represents a sequence of non-negative weights assigned to the $(i)$-th nearest neighbor of $x$. He showed that weighted $k$-NN works exceptionally well under ideal, smooth conditions on the regression function, and obtain rates of convergence. 
Stone's seminal work~\cite{stone} generalized Royall's formulation and  proved that nearest-neighbor type estimators given by (\ref{weighted}) are robust and theoretically guaranteed to converge even when the true underlying data structure is highly irregular or unknown (Universal Consistency). See for instance Theorem 6.3 in \cite{DGL}. 

\begin{theorem}
\label{thm:knn_consistency}
Assume $k = k_n \to \infty$ and $k_n/n \to 0$ as $n \to \infty$. Then the capacity-constrained
$k$-NN classifier is consistent: $L(\hat{g}_b) \to L(g_b)$ in probability.
\end{theorem}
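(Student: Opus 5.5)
The plan is to reduce the weighted $k$-NN rule to a plug-in rule that thresholds a consistent estimate of the posterior. From there, the argument follows the kernel case (Theorem~\ref{thm:kernel_consistency}): first show the estimated threshold converges, then show the decision regions converge.

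Write $\eta_0(x)=\mathbb{P}(Y=0\mid X=x)$ and let $\hat\eta_{0,n}(x)=\frac1k\sum_i \mathbf{1}_{\{Y_i=0\}}\mathbf{1}_{\{X_i\in\mathcal{N}_k(x)\}}$, which is the Royall--Stone estimator (\ref{weighted}) with uniform weights on the $k$ nearest neighbours. Since $M_0(x)>M_1(x)$ is equivalent to $a_0\hat\eta_{0,n}(x)>(1-a_0)(1-\hat\eta_{0,n}(x))$, the rule $g_{a_0}$ in (\ref{clasi}) is
\[
g_{a_0}(x)=0 \iff \hat\eta_{0,n}(x)>1-a_0 .
\]
At the population level, $\eta_0(x)>t$ is equivalent to $f_0(x)/f_1(x)>\gamma$ with $\gamma=\pi_1 t/(\pi_0(1-t))$, a strictly increasing map of $t$. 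By Proposition~\ref{prop:optimal}, $g_b$ is therefore the rule $\eta_0(x)>t^\star$, where $t^\star=1-a_0^\star$ corresponds to $\gamma^\star$. This matches the modified-prior picture of Proposition~\ref{prop:bayes_equiv}, with $a_0^\star$ playing the role of $\pi_0'$.

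The argument then has three steps.

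\textbf{Step 1.} By Stone's theorem (Theorem 6.3 in \cite{DGL}), $k_n\to\infty$ and $k_n/n\to0$ give $\mathbb{E}\int|\hat\eta_{0,n}-\eta_0|\,d\mu\to0$, so $\hat\eta_{0,n}\to\eta_0$ in $\mu$-measure, in probability.

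\textbf{Step 2.} Apply Proposition~\ref{thm:threshold} to the threshold $t=1-a_0$, with $h(t)=\mu(\{\eta_0>t\})-b\pi_0$ and $\hat h(t)=\frac1r\sum_{i=1}^r\mathbf{1}_{\{\hat\eta_{0,n}(X_i)>t\}}-b\pi_0$ computed on the calibration sample. We assume that $\eta_0(X)$ has no atom at $t^\star$ and that $P_\gamma$ is strictly decreasing near $\gamma^\star$, as in Theorem~\ref{thm:kernel_consistency}. Then $h$ is continuous and satisfies (A2), and maximizing $a_0$ subject to the constraint is the same as taking $\min\hat F$ in $t$.

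\textbf{Step 3.} Lemma~\ref{lem:sets}, adapted to the sets $\{\hat\eta_{0,n}>\hat t\}$ versus $\{\eta_0>t^\star\}$ (same dominated-convergence argument), yields $\mu(\{\hat g_b\neq g_b\})\to0$ in probability. Finally,
\[
|L(\hat g_b)-L(g_b)|=|\mathbb{P}(\hat g_b(X)=0,Y=0)-\mathbb{P}(g_b(X)=0,Y=0)|\le \mu(\{\hat g_b\neq g_b\}),
\]
which gives $L(\hat g_b)\to L(g_b)$ in probability.

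The main obstacle is the uniform convergence $\sup_t|\hat h(t)-h(t)|\to0$ needed in Step~2. I would split it into two terms. The first is the calibration fluctuation $\sup_t|\frac1r\sum_i\mathbf{1}\{\hat\eta_{0,n}(X_i)>t\}-\mu(\{\hat\eta_{0,n}>t\})|$. Conditional on the training set this is a one-dimensional Glivenko--Cantelli supremum over half-lines, so the DKW inequality makes it small as $r\to\infty$. The second is the approximation term $\sup_t|\mu(\{\hat\eta_{0,n}>t\})-\mu(\{\eta_0>t\})|$. It can be handled by sandwiching $\{\eta_0>t+\delta\}\setminus\{|\hat\eta_{0,n}-\eta_0|>\delta\}\subseteq\{\hat\eta_{0,n}>t\}\subseteq\{\eta_0>t-\delta\}\cup\{|\hat\eta_{0,n}-\eta_0|>\delta\}$. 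The term is then bounded by $\mu(|\hat\eta_{0,n}-\eta_0|>\delta)$ plus the modulus of continuity of $t\mapsto\mu(\eta_0>t)$ over a compact range of $t$ around $t^\star$. This range has to be kept away from $0$ and $1$, which is where compactness of $\Gamma$ is used. The map $t\mapsto\mu(\eta_0>t)$ is continuous there under the no-atom assumption.

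Two further points need care. One is the ties inherent in $k$-NN, since $\hat\eta_{0,n}$ takes only values in $\{0,1/k,\dots,1\}$; this is harmless because $k\to\infty$ makes the lattice fine. The other is the need to state explicitly the regularity assumption on $\eta_0$, which the theorem statement leaves implicit but which is required for Proposition~\ref{thm:threshold}.
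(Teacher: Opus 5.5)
Your proposal follows the same route as the paper's proof: consistency of the single calibrated parameter (your $t=1-a_0$, the paper's $a_0$) via Proposition~\ref{thm:threshold}, then convergence of the decision regions. It is more careful than the paper in two places: it reduces $g_{a_0}$ explicitly to thresholding $\hat\eta_{0,n}$ at $1-a_0$, and it justifies the uniform convergence that the paper only asserts through a pointwise law of large numbers.

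One sign must be flipped in Step~2. Your $h(t)=\mu(\{\eta_0>t\})-b\pi_0$ is decreasing, so (A2) fails and $\min\hat F$ is, with probability tending to one, just the left endpoint of $\Gamma$. With $h(t)=b\pi_0-\mu(\{\eta_0>t\})$ and $\hat h$ defined likewise, $\hat F$ is the set of capacity-feasible thresholds and $\min\hat F=1-\hat a_0$, as you intend.

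This corrected $\hat h$ is monotone in $t$, so convergence at two points $t^\star\pm\varepsilon$ (chosen off the countably many atoms of $\eta_0(X)$) already suffices. That sidesteps your uniform-convergence obstacle. It also means you only need $\eta_0(X)$ to have no atom at $t^\star$, which is what you assume, rather than on all of $\Gamma$.
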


\begin{proof}
The standard $k$-NN classifier with fixed weights is consistent under the conditions $k \to
\infty$ and $k/n \to 0$ by the results of~\citet{royall1966}; see also Theorem~6.3 in
\citet{DGL} . The argument proceeds in two steps.

\textit{Step 1: consistency of $\hat{a}_0$.}
For any fixed $a_0$, the weighted $k$-NN rule $g_{a_0}$ estimates the conditional probability
$$
    \eta_{a_0}(x) = \mathbb{P}_{a_0}(Y = 0 \mid X = x),
$$
where $\mathbb{P}_{a_0}$ denotes the distribution with class 0 upweighted by $a_0$. By the
standard $k$-NN consistency result, $\hat\eta_{a_0}(x) \to \eta_{a_0}(x)$ in $L_2$ for
almost every $x$. By the law of large numbers,
$$
    \frac{1}{r}\sum_{i=1}^r \mathbf{1}_{\{g_{a_0}(X_i)=0\}}
    \xrightarrow{a.s.} \mathbb{P}_{a_0}(g_{a_0}(X) = 0).
$$
The function $a_0 \mapsto \mathbb{P}_{a_0}(g_{a_0}(X) = 0)$ is continuous and strictly
increasing under the assumption that $\eta_{a_0}(X)$ has a continuous distribution. By
Theorem~\ref{thm:threshold}, $\hat{a}_0 \xrightarrow{P} a_0^\star$, where $a_0^\star$ is the
population-level solution to $\mathbb{P}_{a_0}(g_{a_0}(X) = 0) = b\pi_0$.

\textit{Step 2: consistency of $\hat{g}_b$.}
With $\hat{a}_0 \xrightarrow{P} a_0^\star$ established, the classifier $\hat{g}_b =
g_{\hat{a}_0}$ is a continuous function of $\hat{a}_0$ in the sense that
$\mathbb{P}(g_{\hat{a}_0}(X) \neq g_{a_0^\star}(X)) \to 0$ by the $L_2$ consistency of the
weighted $k$-NN estimator and the continuity of the distribution of $\eta_{a_0^\star}(X)$ at the
decision boundary. Therefore $L(\hat{g}_b) \to L(g_b)$ in probability.
\end{proof}

\begin{remark}
The same argument extends to kernel-weighted $k$-NN rules with weights $w_i(x) =
K(d(x,X_i)/h)$ for a non-increasing kernel $K$, under standard bandwidth conditions.
\end{remark}

\subsection{Support vector machines}
\label{subsec:svm}

The SVM maps inputs $X \in \mathbb{R}^d$ into a reproducing kernel Hilbert space $\mathcal{H}$
via a feature map $\phi$, and seeks a separating hyperplane with maximum margin. Let $\tilde Y_i=2Y_i-1$. The soft-margin SVM solves
$$
    \min_{g \in \mathcal{H},\, a \in \mathbb{R}}
    \frac{1}{2}\|g\|_{\mathcal{H}}^2
    + C \sum_{i=1}^n \max\{0, 1 - \tilde Y_i(g(X_i) + a)\},
$$

yields a real-valued score $$f(x) = \sum_{i=1}^n \alpha_i \tilde Y_i\, k(X_i,x) + a,$$  where $k$ is a positive definite kernel and the coefficients $\alpha_i$ are the solution of the dual optimization problem, satisfying $0\leq \alpha_i\leq C$. Observations with $\alpha_i>0$ are the support vectors and are the only training points that enter the final decision function.  The standard classifier is $g_{\mathrm{svm}}(x) =1_{\{f(x)>0\}}.$

To adapt to the capacity constraint, we propose the modified classifier
$$
    g_b(x) =1_{\{f(x)>\tau\}}, 
$$
where the threshold $\tau$ is estimated on a calibrating sample of size $m$ as
$$\tau=\underset{t\in\mathbb{R}:\, h_t\in\mathcal{C}_b}{\arg \min} \
\widehat L_n(h_t)=\underset{t\in \mathbb{R}:\ \hat{P}(t) \leq b\pi_0 }{\arg \max} \hat{P}_{00}(t)$$
with $\hat{P}(t)=\frac{1}{m}\sum_{i=1}^m \mathbf{1}_{\{h_t(X_i)=-1\}},\quad  \hat{P}_{00}(t)=\frac{1}{m}\sum_{i=1}^m \mathbf{1}_{\{h_t(X_i)=-1, Y_i=1\}},$ and $h_t(x)= 1_{\{f(x)> t \}}$.

\begin{theorem}
\label{thm:svm_consistency}
Assume that the kernel $k$ is a universal kernel (e.g., the Gaussian kernel) and $C = C_n \to \infty$
with $C_n/\sqrt{n} \to 0$. Then the capacity-constrained SVM classifier is consistent:
$L(\hat{g}_b) \to L(g_b)$ in probability.
\end{theorem}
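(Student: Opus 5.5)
The argument mirrors the proofs of Theorem~\ref{thm:kernel_consistency} and Theorem~\ref{thm:knn_consistency}, with the SVM score playing the role of the estimated likelihood ratio.

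\textbf{Step 1: the SVM score is calibrated.} Under a universal kernel and the stated regularization regime, the hinge-loss SVM is universally consistent. By the results of Steinwart on SVM consistency, the score $f_n$ converges in probability (in $L_1(\mu)$, hence in $\mu$-measure) to a limit $f^\ast$. This limit is a monotone transform of the regression function $\eta(x)=\mathbb{P}(Y=1\mid X=x)$, because the hinge loss is classification-calibrated. Since $f_0/f_1$ is a decreasing function of $\eta$, thresholding $f^\ast$ at a level $\tau^\star$ reproduces the region $A_{\gamma^\star}$ of Proposition~\ref{prop:optimal}. Here $\tau^\star$ is chosen so that $\mathbb{P}(f^\ast(X)\le \tau^\star)=b\pi_0$.

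This step is the main obstacle. The hinge-loss minimizer is $\mathrm{sign}(2\eta-1)$, which is not strictly monotone in $\eta$: it collapses all information away from $\eta=1/2$. So a threshold at $\tau^\star\neq 0$ is not automatically aligned with $A_{\gamma^\star}$. I would first try to handle this by adding an explicit assumption, namely that the limiting score is a strictly monotone function of $\eta$, as holds for instance when one uses the regularized (finite-$C$) population solution or a margin-based surrogate with invertible link. Failing that, I would restrict the claim to the case where the relevant level set is still identified by $f^\ast$. I expect this identification is where the theorem genuinely needs a hypothesis beyond universality of $k$.

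\textbf{Step 2: consistency of the threshold.} Set $h(t)=b\pi_0-\mathbb{P}(f^\ast(X)\le t)$, and let $\hat h(t)=b\pi_0-\hat P(t)$ be its calibration-sample version. Because the calibration sample is independent of training, the class $\{x: f_n(x)\le t\}$, indexed by $t$, is a one-dimensional threshold class. The Dvoretzky--Kiefer--Wolfowitz inequality applied conditionally on $f_n$ therefore gives $\sup_t|\hat P(t)-\mathbb{P}(f_n(X)\le t)|\to 0$. Convergence of $f_n\to f^\ast$ in measure, combined with continuity of the distribution of $f^\ast(X)$ (no atoms), gives $\sup_t|\mathbb{P}(f_n(X)\le t)-\mathbb{P}(f^\ast(X)\le t)|\to 0$ by a P\'olya-type argument. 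The same no-atom assumption gives the separation condition (A2). Proposition~\ref{thm:threshold}, applied to the monotone function $h$, then yields $\tau\xrightarrow{P}\tau^\star$. Maximizing $\hat P_{00}$ under the constraint selects the largest feasible $t$, which matches the boundary point of the feasible set.

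\textbf{Step 3: conclusion.} Apply Lemma~\ref{lem:sets} with $h_n=f_n-\tau$ and $h=f^\ast-\tau^\star$, writing the regions as sublevel sets. Together with $\mu(\{f^\ast=\tau^\star\})=0$, this gives $\mu(\{f_n\le\tau\}\,\Delta\,A_{\gamma^\star})\to 0$. Since $|L(\hat g_b)-L(g_b)|\le \pi_0 F_0(\{f_n\le\tau\}\Delta A_{\gamma^\star})\le \mu(\cdot)$, we obtain $L(\hat g_b)\to L(g_b)$ in probability. The capacity constraint holds asymptotically by Step 2.
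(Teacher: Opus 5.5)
Your three-step skeleton (convergence of the score, consistency of the threshold via Proposition~\ref{thm:threshold}, convergence of the flagged region) is the same as the paper's. Your Steps~2--3 are in fact more careful: the paper applies Glivenko--Cantelli as if the calibration scores were draws of $f^\star(X)$, whereas you split the error into a conditional DKW term and a P\'olya term.

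\textbf{The Step~1 gap.} The obstacle you raise in Step~1 is a genuine gap, and the paper's proof does not close it. The paper describes $f^\star$ as ``the population minimizer of the regularized risk'' and asserts that classification consistency is ``equivalent'' to weak convergence of the law of the score. It then assumes $P(t)=\mathbb{P}(f^\star(X)\le t)$ is continuous and strictly increasing at $\tau$, and simply writes down $\mathbf{1}_{\{f^\star>\tau\}}=g_b$.

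With $C_n\to\infty$ the regularization vanishes. Hinge-risk consistency then only forces $f_n\to\operatorname{sign}(2\eta-1)$ in $\mu$-measure on $\{\eta\notin\{0,\tfrac12,1\}\}$, because the conditional excess hinge risk at $t$ is at least $\min\{|2\eta-1|,\eta,1-\eta\}\,|t-\operatorname{sign}(2\eta-1)|$. Classification calibration concerns only this sign, so it cannot supply the ranking you need.

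Consequently, when $\mu(\{\eta\in\{0,\tfrac12,1\}\})=0$, $P$ is a step function with jumps at $\pm1$ and constant value $\mathbb{P}(\eta(X)<\tfrac12)$ on $[-1,1)$. The equation $P(\tau)=b\pi_0$ then has no solution unless $b\pi_0$ equals the Bayes flagging rate. In that case it has a whole interval of solutions, so (A2) fails either way.

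In the regime the paper targets, the Bayes rule flags fewer than $b\pi_0$. The threshold must then fall inside $\{\eta>\tfrac12\}$, where all scores tend to $1$, and the outcome depends on how $f_n$ orders points there. Hinge-risk consistency says nothing about that ordering.

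\textbf{Repairs.} So your proposal, like the paper's argument, establishes at most a modified theorem. Your first suggested repair does not work. The fixed-$C$ population solution (which is also how the paper describes $f^\star$) is not the limit when $C_n\to\infty$. It is also an RKHS function determined by a global variational problem, not a function of $\eta(x)$, so points with equal $\eta$ can receive different scores.

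Two repairs do work:
\begin{enumerate}
\item Assume outright that $f_n\to\psi\circ\eta$ in $\mu$-measure for some strictly increasing $\psi$.
\item Replace the hinge loss by a margin loss with invertible link, such as the squared hinge, whose conditional risk minimizer is $2\eta-1$.
\end{enumerate}
Either way you also need the distribution of $\psi(\eta(X))$ to have no atom at $\tau^\star$.

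\textbf{Two smaller corrections to Step~2.} First, absence of atoms gives continuity of $P$ but not (A2). You also need $\tau^\star$ to be a point of strict increase, i.e.\ $P(\tau^\star-\varepsilon)<b\pi_0<P(\tau^\star+\varepsilon)$ for all $\varepsilon>0$. Second, $h(t)=b\pi_0-P(t)$ is decreasing, so Proposition~\ref{thm:threshold} applies only after the reflection $t\mapsto -t$, with $\hat\tau=\max\hat F$. Your final sentence of Step~2 uses this implicitly.
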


\begin{proof}
The proof is carried out in two steps.

\textit{Step 1: consistency of the score function.}
Under the stated conditions on $C_n$ and the universality of the kernel, the classical SVM is
consistent in the sense that $\mathbb{P}(g_{\mathrm{svm}}(X) \neq g_c(X)) \to 0$
\citep{steinwart2005}. Equivalently, the distribution of the score $f(X)$ converges weakly to
the distribution of the population-level score $f^\star(X)$, where $f^\star$ is the population
minimizer of the regularized risk.

\textit{Step 2: consistency of $\hat\tau$.}
Define $P(t) = \mathbb{P}(f^\star(X) \leq t)$ and $h(t) = b\pi_0 - P(t)$. The threshold $\tau$
is the population-level solution to $P(\tau) = b\pi_0$, i.e., the $(1-b\pi_0)$-quantile of the score distribution. Since $\hat{P}(t) = m^{-1}\sum_i \mathbf{1}_{\{f(X_i) \leq t\}}$ and the
empirical distribution of $f(X_1), \ldots, f(X_m)$ converges uniformly to $P(t)$ by the Glivenko--Cantelli theorem, we have $\sup_t |\hat{P}(t) - P(t)| \xrightarrow{a.s.} 0$.
Provided $P$ is continuous and strictly increasing at $\tau$ (i.e., the score distribution has no atom at $\tau$), Theorem~\ref{thm:threshold} gives $\hat\tau \xrightarrow{P} \tau$.

\textit{Step 3: conclusion.}
Since $\hat\tau \xrightarrow{P} \tau$ and $\mathbb{P}(f(X) = \tau) = 0$ by the continuity of the
score distribution, the classifier  $1_{\{f(x)>\hat \tau\}}$ converges in probability to
 $1_{\{f^\star(x)> \tau\}}= g_b(X)$ for almost every $X$. Therefore $L(\hat{g}_b) \to
L(g_b)$ in probability by dominated convergence.
\end{proof}

\subsection{Random forests}
\label{subsec:rf}

In the same spirit as the approach adopted for SVM we propose to decouple score estimation from threshold selection. Rather than modifying the underlying learning algorithm, we first construct a real-valued scoring function and subsequently determine the decision threshold so as to enforce the desired capacity constraint. This two-step strategy preserves the predictive strength of the base classifier while introducing explicit control over the global proportion of positive predictions.
		
Let $\hat\eta_{\mathrm{RF}}(x) = B^{-1}\sum_{j=1}^B g_{Z_j}(x)$ denote the Random Forest estimate of
$\eta(x)=\mathbb{P}(Y=1\mid X=x)$,
where $B$ is the total number of trees in the forest and each tree $g_{Z_j}$ returns the empirical probability of class $1$ in the terminal leaf containing $x$. The capacity-constrained Random Forest classifier is
$$
    \hat{g}_b(x) = 1_{\{\hat\eta_{\mathrm{RF}}(x) - \tfrac{1}{2} > \tau \}},$$
 where $\hat\tau$ is estimated on a calibration sample analogously to the SVM case:
$$
    \hat\tau = \underset{t\in\mathbb{R}:\, s_t\in\mathcal{C}_b}{\arg \min} \
\widehat L_n(s_t)=\operatorname*{arg\,max}_{t:\, \hat{P}(t) \leq b\pi_0} \hat{P}_{00}(t).
$$

		with
		$\hat{P}(t)=\frac{1}{m}\sum_{i=1}^m \mathbf{1}_{\{s_t(X_i)=-1\}},\quad  \hat{P}_{00}(t)=\frac{1}{m}\sum_{i=1}^m \mathbf{1}_{\{s_t(X_i)=-1, Y_i=1\}},$
		and $s_t(x)=   1_{\{\frac{1}{B}\sum_{j=1}^{B} g_{Z_j}(x)-1/2 > t\}}$. The estimation of $\tau$ is performed using an independent training set of size $m$.
		
		We now establish the consistency of the  classifier under standard assumptions.

\begin{theorem}
\label{thm:rf_consistency}
Assume:
\begin{enumerate}
    \item $\hat\eta_{\mathrm{RF}}(x) \to \eta(x) := \mathbb{P}(Y=1 \mid X=x)$ in probability
    for almost every $x$;
    \item The distribution of $\eta(X)$ is continuous at $\tfrac{1}{2} + \tau$;
    \item $\tau$ is the unique solution to $\mathbb{P}(\eta(X) \geq \tfrac{1}{2} + \tau) =
    b\pi_0$.
\end{enumerate}
Then: (i) $\hat\tau \xrightarrow{P} \tau$; (ii) $\mathbb{P}(\hat{g}_b(X) \neq g_b^\star(X)) \to
0$, where $g_b^\star(x) = \mathrm{sign}(\eta(x) - \tfrac{1}{2} - \tau)$.
\end{theorem}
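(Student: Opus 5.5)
The plan mirrors the proof of Theorem~\ref{thm:svm_consistency}: first reduce the threshold question to Proposition~\ref{thm:threshold}, then transfer convergence of $\hat\tau$ and of $\hat\eta_{\mathrm{RF}}$ to convergence of the decision regions.

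\emph{Setting up (i).} Write the population capacity function as $P(t)=\mathbb{P}(\eta(X)-\tfrac12\le t)$. Its empirical counterpart on the independent calibration sample is $\hat P(t)=m^{-1}\sum_{i=1}^m \mathbf{1}_{\{\hat\eta_{\mathrm{RF}}(X_i)-\frac12\le t\}}$. Conditionally on the training sample, this is the empirical CDF of the scores $\hat\eta_{\mathrm{RF}}(X_i)$.

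I would split the discrepancy $\sup_t|\hat P(t)-P(t)|$ into two parts.
\begin{itemize}
\item The first part is $\sup_t|\hat P(t)-\hat P^{\circ}(t)|$, where $\hat P^{\circ}(t)=\mathbb{P}(\hat\eta_{\mathrm{RF}}(X)-\tfrac12\le t\mid \text{training})$. Applying the Glivenko--Cantelli theorem conditionally on the training data sends this term to $0$ a.s.\ as $m\to\infty$.
\item The second part is $\sup_t|\hat P^{\circ}(t)-P(t)|$. By assumption~1 and dominated convergence, $\hat\eta_{\mathrm{RF}}(X)\to\eta(X)$ in probability. Hence $\hat\eta_{\mathrm{RF}}(X)$ converges in distribution to $\eta(X)$, so $\hat P^{\circ}(t)\to P(t)$ at every continuity point of $P$. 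Uniformity would follow from a P\'olya-type argument if $P$ is continuous. Since assumption~2 only gives continuity at the relevant point, I would settle for uniformity on a small compact neighbourhood $\Gamma$ of $\tau$, which is all Proposition~\ref{thm:threshold} requires.
\end{itemize}

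\emph{Proving (i).} Take the criterion function $h(t)=b\pi_0-P(t)$, or its reversed-sign analogue chosen to match the orientation of $\hat\tau$ as a maximiser under the constraint. The condition $\hat P(t)\le b\pi_0$ selects an extremal feasible $t$, and the objective $\hat P_{00}$ is monotone in $t$. So $\hat\tau$ coincides with the boundary point of $\hat F$, and Proposition~\ref{thm:threshold} applies. Assumption~A1 holds because $t$ large makes the capacity trivially satisfied. Assumption~A2, uniqueness plus the strict separation inequalities, comes from assumption~3 (uniqueness) together with assumption~2 (continuity at $\tfrac12+\tau$). The proposition then gives $\hat\tau\xrightarrow{P}\tau$.

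\emph{Proving (ii).} Decompose the disagreement event:
\[
\{\hat g_b(X)\ne g_b^\star(X)\}\subseteq\{|\eta(X)-\tfrac12-\tau|\le\delta\}\cup\{|\hat\eta_{\mathrm{RF}}(X)-\eta(X)|>\delta/2\}\cup\{|\hat\tau-\tau|>\delta/2\}.
\]
The three terms are handled separately.
\begin{itemize}
\item The second term's probability vanishes by assumption~1 (in probability over both the sample and $X$).
\item The third term vanishes by (i).
\item The first term tends to $\mathbb{P}(\eta(X)=\tfrac12+\tau)=0$ as $\delta\downarrow0$, by assumption~2.
\end{itemize}
Letting $n\to\infty$ and then $\delta\to0$ gives (ii).

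\emph{Main obstacle.} The hardest step is getting the uniform convergence hypothesis of Proposition~\ref{thm:threshold} with only pointwise convergence of $\hat\eta_{\mathrm{RF}}$ and continuity only at one point. It must also be reconciled with the strict separation requirement in A2, which continuity at a single point does not by itself ensure. I would first try to establish the separation locally, using the fact that $P$ is monotone and $\tau$ is the unique solution. If that fails, I would add the mild assumption that $P$ is continuous and strictly monotone near $\tau$, as was implicitly done in Theorem~\ref{thm:svm_consistency}.
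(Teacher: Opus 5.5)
Your plan follows the paper's proof. Part (i) applies Proposition~\ref{thm:threshold} to the empirical capacity function on the calibration sample. Part (ii) combines $\hat\tau\xrightarrow{P}\tau$, $\hat\eta_{\mathrm{RF}}\to\eta$ and the absence of an atom of $\eta(X)$ at $\tfrac12+\tau$; your $\delta$-decomposition is a tidier form of the paper's pointwise-then-dominated-convergence step.

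The obstacle you flag is real, and the paper does not address it. It asserts $\sup_t|\hat P(t)-P(t)|\xrightarrow{P}0$ ``by Glivenko--Cantelli applied to the estimated scores'', but this does not follow from the hypotheses and can fail when $\eta(X)$ has atoms away from $\tfrac12+\tau$. Your fallback, uniformity on a neighbourhood of $\tau$, does not follow from assumption~2 either, since a P\'olya argument needs continuity on the whole neighbourhood.

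No extra assumption is needed, because uniform convergence is never needed. Work with $P(t)=\mathbb{P}(\eta(X)\ge\tfrac12+t)$, which is non-increasing.
\begin{itemize}
\item Uniqueness in assumption~3 alone gives $P(t)>b\pi_0$ for $t<\tau$ and $P(t)<b\pi_0$ for $t>\tau$. Because $b\pi_0-P$ is monotone, the sup and inf in A2 are attained at $\tau\mp\varepsilon$, so A2 holds without appealing to continuity.
\item Since $\hat P$ is also non-increasing, the event $\{\hat P(\tau-\varepsilon)>b\pi_0,\ \hat P(\tau+\varepsilon)<b\pi_0\}$ already forces $|\hat\tau-\tau|\le\varepsilon$. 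To see this, rerun the proof of Proposition~\ref{thm:threshold} using monotonicity in place of the sup-norm bound. So you only need $\hat P(t)\xrightarrow{P}P(t)$ at the two points $t=\tau\pm\varepsilon$.
\item Choose $\varepsilon$ arbitrarily small with $\tau\pm\varepsilon$ continuity points of $P$; all but countably many values qualify.
\item Pointwise convergence at those points follows from your own split. Use Chebyshev conditionally on the training sample, together with $\hat\eta_{\mathrm{RF}}(X)\to\eta(X)$ in probability, which comes from assumption~1 and dominated convergence.
\end{itemize}
Thus (i) uses neither continuity of $P$ near $\tau$ nor assumption~2; the latter is needed only in (ii).

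One bookkeeping point remains. To invoke assumption~3, your $P$ and $\hat P$ must be the $\ge$ versions used in the paper's proof. Your $P(t)=\mathbb{P}(\eta(X)-\tfrac12\le t)$ matches the literal definition of $\hat g_b$, since the paper's sign conventions are inconsistent here. However, its $b\pi_0$-level point is the $b\pi_0$-quantile of $\eta(X)$ rather than the $(1-b\pi_0)$-quantile, and assumption~3 says nothing about its uniqueness. Fix the orientation dictated by assumption~3 and use it throughout.
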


\begin{proof}
\textit{Step 1: convergence of $\hat\tau$.}
By assumption (1) and the continuous mapping theorem, the empirical distribution of
$\hat\eta_{\mathrm{RF}}(X_1), \ldots, \hat\eta_{\mathrm{RF}}(X_m)$ converges weakly to the
distribution of $\eta(X)$. Define $P(t) = \mathbb{P}(\eta(X) \geq \tfrac{1}{2} + t)$ and
$\hat{P}(t) = m^{-1}\sum_i \mathbf{1}_{\{\hat\eta_{\mathrm{RF}}(X_i) \geq 1/2 + t\}}$. By the
Glivenko--Cantelli theorem applied to the estimated scores, $\sup_t |\hat{P}(t) - P(t)|
\xrightarrow{P} 0$. By assumption (2), $P$ is continuous at $\tau$, and by assumption (3),
$\tau$ is the unique solution to $P(\tau) = b\pi_0$. Theorem~\ref{thm:threshold} then gives
$\hat\tau \xrightarrow{P} \tau$.

\textit{Step 2: convergence of $\hat{g}_b$.}
Fix any $x$ such that $\eta(x) \neq \tfrac{1}{2} + \tau$. Since $\hat\eta_{\mathrm{RF}}(x)
\xrightarrow{P} \eta(x)$ and $\hat\tau \xrightarrow{P} \tau$, we have $\hat\eta_{\mathrm{RF}}(x)
- \tfrac{1}{2} - \hat\tau \xrightarrow{P} \eta(x) - \tfrac{1}{2} - \tau \neq 0$. 

The indicator function is continuous at every point except at the threshold $\tau$,  so $\hat{g}_b(x) \xrightarrow{P} g_b^\star(x)$
for almost every $x$. By dominated convergence, $\mathbb{P}(\hat{g}_b(X) \neq g_b^\star(X)) \to
0$.
\end{proof}

\subsection{Neural networks}
\label{subsec:nn}

Let $\mathcal{C}_k$ denote the class of neural networks with one hidden layer and $k$ neurons,
where each $g \in \mathcal{C}_k$ is of the form $g(x) = \mathbf{1}_{\{\psi(x) \geq 1/2\}}$
with
\[
    \psi(x) = c_0 + \sum_{i=1}^k c_i \sigma(\psi_i(x)),
    \qquad
    \psi_i(x) = b_i + \sum_{j=1}^d a_{ij} x_j,
\]
and $\sigma$ is the threshold sigmoid with values in $\{-1, 1\}$. Consistency in this setting
follows from Corollary~\ref{cor:erm} together with the following VC dimension bound.

\begin{theorem}[\citealt{baum}]

\label{thm:baum_haussler}
The shatter coefficients of $\mathcal{C}_k$ satisfy
\[
    S(\mathcal{C}_k, n) \leq (ne)^{kd + 2k + 1},
\]
and therefore
\[
    \mathrm{VCdim}(\mathcal{C}_k) \leq 2(kd + 2k + 1)\log(e(kd + 2k + 1)).
\]
\end{theorem}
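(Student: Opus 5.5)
The plan follows the classical Baum--Haussler argument. First bound the number of distinct labelings of $n$ points induced by the hidden layer, then bound, for each hidden-layer labeling, the number induced by the output unit. Finally convert the growth bound into a VC bound.

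\emph{Step 1 (hidden layer).} Fix $x_1,\dots,x_n\in\mathbb{R}^d$. Each hidden neuron $\sigma(\psi_i(x))$ is the sign of an affine function $b_i+\sum_j a_{ij}x_j$. Half-spaces in $\mathbb{R}^d$ have VC dimension $d+1$, so by Sauer's lemma a single neuron realizes at most
\[
\sum_{j=0}^{d+1}\binom{n}{j}\le (ne/(d+1))^{d+1}\le (ne)^{d+1}
\]
sign patterns on the $n$ points. Hence the vector $(\sigma(\psi_1(x_l)),\dots,\sigma(\psi_k(x_l)))_{l\le n}$ takes at most $(ne)^{k(d+1)}$ values as the first-layer parameters vary.

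\emph{Step 2 (output layer).} Once the hidden patterns are fixed, the points are mapped to $z_1,\dots,z_n\in\{-1,1\}^k$. The output $\mathbf{1}_{\{c_0+\sum_i c_i z_i\ge 1/2\}}$ is a half-space classifier in $\mathbb{R}^k$, which has VC dimension $k+1$. Sauer's lemma therefore gives at most $(ne)^{k+1}$ labelings.

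Multiplying the two counts yields
\[
S(\mathcal{C}_k,n)\le (ne)^{k(d+1)}(ne)^{k+1}=(ne)^{kd+2k+1},
\]
which is the first claim. I should check that the crude bound $\sum_{j\le m}\binom nj\le (ne)^m$ is valid for all $n\ge1$. It is, since $\sum_{j\le m}\binom nj\le n^m+1\le (ne)^m$ for $m\ge1$; the case $n<m$ is handled trivially since the count is then at most $2^n$.

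\emph{Step 3 (VC dimension).} Set $W=kd+2k+1$. If $n$ points are shattered, then $2^n\le (ne)^W$, that is,
\[
n\log 2\le W\log(ne).
\]
The standard inversion lemma states that $n\le a\log(bn)$ implies $n\le 2a\log(ab)$ for $ab\ge e$ (up to the base of the logarithm). Applying it with $a=W/\log 2$ and $b=e$ gives $n\le 2W\log(eW)$ up to constant factors.

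This inversion is the main obstacle, specifically getting exactly the constant $2W\log(eW)$ with natural logarithms. I would first try showing directly that $n=2W\log(eW)$ violates $2^n\le (ne)^W$. Substituting, this reduces to the inequality
\[
2\log 2\,\log(eW)>\log(2eW\log(eW)).
\]
If that inequality fails for small $W$, I would instead use $\log_2$ in the stated bound, as in Baum--Haussler, and verify numerically that the inequality holds for $W\ge1$.
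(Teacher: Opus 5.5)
Your Steps~1--2 are the standard Baum--Haussler counting argument. The paper relies on that argument by citation and gives no proof of its own. Your counting correctly yields $S(\mathcal{C}_k,n)\le (ne)^{W}$ with $W=kd+2k+1\ge 4$.

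In Step~3 your fallback is not optional. With natural logarithms the second display does not follow from the first for small $W$. For $k=d=1$ ($W=4$) one has $2^{20}<(20e)^4$ although $2W\ln(eW)\approx 19.1$. Your test inequality $2\ln 2\,\ln(eW)>\ln(2eW\ln(eW))$ fails for every $W$ below roughly $300$. So the unsubscripted $\log$ in the statement must be read as $\log_2$, which is also the base-2 form of the Baum--Haussler result.

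With $\log_2$ the check at $n_0=2W\log_2(eW)$ reduces to $eW>2\log_2(eW)$, i.e.\ $2^{eW}>(eW)^2$. This holds for all $eW>4$, hence for all $W\ge 2$, so no numerics are needed. It fails at $W=1$, contrary to your remark, but that case does not arise here.

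You also need to handle every $n\ge n_0$, not just $n_0$ itself. Since $n\mapsto n-W\log_2(en)$ is increasing for $n>W/\ln 2$, the inequality $2^n>(ne)^W$ persists for every $n\ge n_0$. That gives $\mathrm{VCdim}(\mathcal{C}_k)<n_0$.
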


\begin{corollary}
\label{cor:nn_consistency}
Let $k = k_n \to \infty$ with $k_n \log(n)/n \to 0$. Then:
\begin{enumerate}
    \item The empirical risk minimizer $g_n$ over $\mathcal{C}_k$ satisfies $R(g_n) \to R^\star_{\mathcal{C}} := \inf_{g \in \mathcal{C}} R(g)$ almost surely for all distributions of $(X,Y)$.
    \item The empirical capacity-constrained minimizer $\hat{g}_b$ over $\mathcal{C}_{b,k} :=
    \mathcal{C}_k \cap \mathcal{C}_b$ satisfies $L(\hat{g}_b) \to L^\star_{\mathcal{C}_b} :=
    L(g_b)$ almost surely.
\end{enumerate}
\end{corollary}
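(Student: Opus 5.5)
The plan is the standard estimation–approximation decomposition, with Theorem~\ref{thm:baum_haussler} controlling the estimation term and a universal approximation property of one-hidden-layer threshold networks controlling the approximation term. For part 1, write
\[
R(g_n) - R^\star_{\mathcal{C}} = \Bigl(R(g_n) - \inf_{g \in \mathcal{C}_{k_n}} R(g)\Bigr) + \Bigl(\inf_{g \in \mathcal{C}_{k_n}} R(g) - R^\star_{\mathcal{C}}\Bigr).
\]

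For the estimation term, apply the argument of Corollary~\ref{cor:erm} to the empirical risk, giving $R(g_n) - \inf_{\mathcal{C}_{k_n}} R \le 2\sup_{g\in\mathcal{C}_{k_n}}|\hat R_n(g)-R(g)|$. Then use the VC inequality
\[
\mathbb{P}\Bigl(\sup_{g \in \mathcal{C}_{k_n}} |\hat R_n(g) - R(g)| > \varepsilon\Bigr) \le 8\, S(\mathcal{C}_{k_n}, n)\, e^{-n\varepsilon^2/32},
\]
together with $\log S(\mathcal{C}_{k_n},n) \le (k_n d + 2k_n + 1)\log(ne)$. Under $k_n\log n/n \to 0$ this exponent is $o(n)$, so the bound is summable in $n$. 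Borel--Cantelli then gives almost sure convergence of the estimation term to zero.

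For the approximation term, I would invoke the density of one-hidden-layer threshold networks: the class $\bigcup_k \mathcal{C}_k$ approximates any measurable decision set in $\mu$-measure (Theorem 30.4 in \citet{DGL}, via approximation of the Bayes regression function). Since $\mathcal{C}_k$ is increasing in $k$ and $k_n\to\infty$, the approximation term tends to zero for every distribution. This gives part 1.

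For part 2, use the same decomposition with $L$ in place of $R$ and $\mathcal{C}_{b,k_n}$ in place of $\mathcal{C}_{k_n}$. Since $\mathcal{C}_{b,k}\subseteq \mathcal{C}_k$, Theorem~\ref{thm:vc} gives $S(\mathcal{C}_{b,k},n)\le S(\mathcal{C}_k,n)$. The indicator loss $\mathbf{1}\{g(X)=0,Y=0\}$ induces a family of sets with the same shatter bound, so the VC/Borel--Cantelli argument above carries over verbatim. The steps up to here are routine.

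The main obstacle is the approximation step under the constraint: showing $\inf_{g\in\mathcal{C}_{b,k_n}} L(g)\to L(g_b)$. The optimal set $A_{\gamma^\star}$ of Proposition~\ref{prop:optimal} saturates the constraint $\mu(A)=b\pi_0$. A network approximating it may therefore violate feasibility, so it cannot simply be substituted into the infimum over $\mathcal{C}_{b,k_n}$. My first attempt is to approximate a slightly shrunk set $A_{\gamma^\star+\delta}$ instead. It has $\mu(A_{\gamma^\star+\delta}) < b\pi_0$ by strict monotonicity of $P_\gamma$ under the no-atom assumption, and $F_0(A_{\gamma^\star+\delta})\to F_0(A_{\gamma^\star})$ as $\delta\downarrow 0$. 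Take networks $g_k$ with $\mu(\{g_k=0\}\,\Delta\, A_{\gamma^\star+\delta})\to 0$. For large $k$ these are feasible, with loss within $\delta'$ of $L(g_b)$. Letting $\delta\to0$ along a diagonal sequence completes the argument.

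A secondary subtlety is that $\hat g_b$ is defined through the unknown constraint $\mathbb{P}(g(X)=0)\le b\pi_0$. If this must be enforced empirically, I would relax it to $\hat{\mathbb{P}}_n(g(X)=0)\le b\pi_0+\epsilon_n$ with $\epsilon_n\to0$ slower than the uniform deviation. The same uniform bound then controls feasibility violation asymptotically.
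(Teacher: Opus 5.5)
Your argument is correct, and it is more complete than the paper's. For part 1 the paper simply cites the Farag\'o--Lugosi theorem (Theorem~30.6 in \citet{DGL}). Your VC-inequality/Borel--Cantelli bound combined with $L_1(\mu)$-density of one-hidden-layer threshold networks is exactly how that theorem is proved, so there is no real difference there.

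For part 2 the paper bounds $\mathrm{VCdim}(\mathcal{C}_{b,k})$ via Theorem~\ref{thm:vc}, obtains $\sup_{g\in\mathcal{C}_{b,k_n}}|\hat L_n(g)-L(g)|\to 0$ a.s., and then invokes Corollary~\ref{cor:erm}. The key step there, $\hat L_n(\hat g_b)\le \hat L_n(g_b)$, requires $g_b$ to lie in the class being searched. Applied to $\mathcal{C}_{b,k_n}$, it therefore only gives $L(\hat g_b)-\inf_{\mathcal{C}_{b,k_n}}L\to 0$. The paper never addresses the approximation term $\inf_{\mathcal{C}_{b,k_n}}L\to L(g_b)$, nor the boundary-feasibility problem you point out. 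Your shrinking argument is what closes that gap.

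One justification needs repair. Absence of atoms in the likelihood ratio gives continuity of $\gamma\mapsto P_\gamma$, not strict monotonicity; the paper makes the same slip in the proof of Theorem~\ref{thm:kernel_consistency}. So $\mu(A_{\gamma^\star+\delta})<b\pi_0$ fails if $P_\gamma$ is flat just to the right of $\gamma^\star$. There are two fixes:
\begin{itemize}
\item Take $\gamma^\star$ to be the largest root of $P_\gamma=b\pi_0$; all roots give $\mu$-a.e.\ equal sets.
\item More robustly, delete from an optimal set a measurable piece of $\mu$-measure $\eta$. Such a piece exists because $\mu$ has a density and hence no atoms. Since $\pi_0F_0\le\mu$, this raises $L$ by at most $\eta$.
\end{itemize}
The second device also covers an atom of the likelihood ratio at $\gamma^\star$, where approximating $A_{\gamma^\star+\delta}$ would only reach $F_0(A_{\gamma^\star})$, possibly below the optimum. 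The same estimate also shows that the constrained optimum is continuous in the capacity level. That continuity is exactly what your relaxed empirical-constraint variant needs for the lower bound on $L(\hat g_b)$.
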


\begin{proof}
Part (1) is the classical result of \citet{baum1}; see Theorem~30.6 in \citet{DGL}.
For Part (2), by Theorem~\ref{thm:vc}, $\mathrm{VCdim}(\mathcal{C}_{b,k}) \leq
\mathrm{VCdim}(\mathcal{C}_k) \leq 2(kd+2k+1)\log(e(kd+2k+1))$. Under the condition $k_n
\log(n)/n \to 0$, the VC inequality gives
\[
    \sup_{g \in \mathcal{C}_{b,k}} |\hat{L}_n(g) - L(g)| \xrightarrow{a.s.} 0.
\]
The conclusion then follows from Corollary~\ref{cor:erm}.
\end{proof}

\subsection{Empirical Evaluation}
\label{subsec:setup_emp}
We compare three families of classifiers. The classical classifier $g$ is trained without any capacity constraint and uses the default decision threshold. The SMOTE-based classifier $g_{\mathrm{smote}}$ is trained on an augmented sample in which the minority class is oversampled by a factor of four. To assess robustness, we also consider a more aggressive non-linear oversampling scheme with amplification factors up to 20 in Appendix~A. Finally, the proposed capacity-constrained classifier $g_b$ is evaluated for $b\in\{1,2,3\}$.

Each family is instantiated with three base learners: Random Forest (RF), Support Vector Machine (SVM, Gaussian kernel), and $k$-nearest neighbors (kernel-weighted, Gaussian kernel). For RF and SVM, the capacity constraint is enforced by adjusting the threshold on a left-over sample, as described in Sections~\ref{subsec:svm} and~\ref{subsec:rf}. For $k$-NN, the weight $a_0$ is calibrated as described in Section~\ref{subsec:knn}.

The empirical evaluation is based on training samples of size 8{,}000. The minority-class proportion $\pi_0=\mathbb{P}(Y=0)$ is varied by resampling, allowing us to assess performance across controlled imbalance regimes. 

For the proposed classifier $g_b$, each training sample is divided into two equal parts. The first part is used to fit the underlying model, while the second is used to calibrate the capacity-related parameter: $a_0$ for $k$-NN and $\tau$ for RF and SVM. No additional hyperparameters are introduced to control this split proportion.

Performance is estimated on a separate test sample of size 10{,}000. The entire procedure is repeated 100 times, except for SVM, for which 30 repetitions are used.  The reported results correspond to averages over replications.

For the purpose of illustrating the proposed methodology, we deliberately avoid introducing additional hyperparameter tuning in the baseline classifiers. The goal of this empirical study is not to obtain the best possible predictive model for this particular dataset, but rather to show how standard classification methods can be combined with the proposed capacity-based calibration procedure.

The baseline RF classifier was trained with 503 trees, using randomly selected candidate variables $\lfloor \sqrt{p} \rfloor$ at each split, where $p=23$ is the number of predictors. The minimum terminal node size was set to 5. No class weights were used. SVM classifiers were trained using the standard $C$-classification formulation with a radial basis function kernel, with $C=1$ and $\gamma=0.1$. The weighted $k$-NN classifier was implemented with $k=5$ and a Gaussian kernel. Thus, predictions were based on the five nearest neighbors, with closer observations receiving larger weights.

The empirical analysis is based on the Taiwanese credit card default dataset introduced by~\citet{credit}, which has been widely used to evaluate classification methods in imbalanced settings. The data consist of payment records for 25{,}000 credit card clients collected in October 2005 from a major commercial bank in Taiwan, of whom 5{,}529 (22.1\%) defaulted in the following month. Since the original dataset exhibits only moderate imbalance, we construct controlled imbalance scenarios by resampling minority observations to achieve target values of $\pi_0 \in [0.005, 0.05]$.
	\begin{figure}
		\centering
		\includegraphics[width=1.\textwidth]{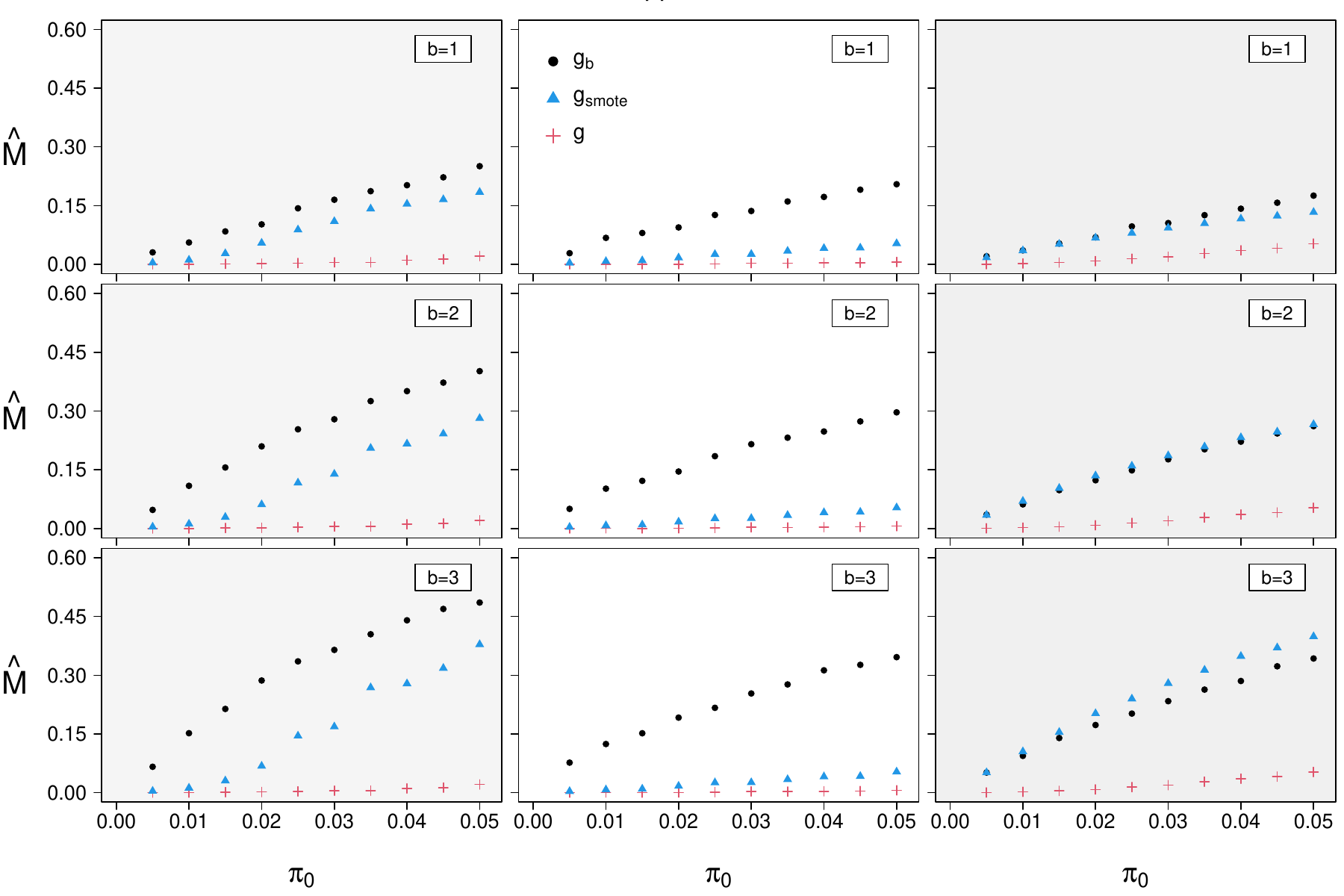}
		\caption{Comparison of classical classifiers, Random Forest, SVM, and k-NN (red crosses), with our capacity-constrained adaptation for  $b=1,2,3$ (black points), along with classical classification under SMOTE (blue triangles). Columns represent methods and rows represent values of 
$b$.  Each panel shows $\hat{M}$ as a function of $\pi_0$ }
		\label{fig_datos}
	\end{figure}
Figure~\ref{fig_datos} summarizes the results. The three columns correspond to RF, SVM, and $k$-NN; the rows correspond to $b \in \{1, 2, 3\}$. Each panel plots $\hat{M}$ as a function of $\pi_0$ for the classical classifier (red crosses), SMOTE (blue triangles), and the proposed capacity-constrained classifier (black circles).

Across the experiments, classical classifiers exhibit very low values of $\hat{M}$, especially for small values of $\pi_0$. This behavior reflects their tendency to predict the majority class when the minority class is rare, leading to high overall accuracy but near-zero sensitivity to the target class. For instance, at $\pi_0=0.02$, the unconstrained Random Forest achieves only $\hat{M}\approx 0.0016$, which corresponds to detecting fewer than one default per thousand cases under the capacity constraint.

The proposed capacity-constrained classifiers achieve substantially higher values of $\hat{M}$ throughout the experiments, with improvements becoming more pronounced as $b$ increases. Larger values of $b$ allow a higher proportion of observations to be flagged as belonging to the minority class, increasing coverage at the cost of reduced specificity. For example, at $\pi_0=0.02$ and $b=3$, the capacity-constrained Random Forest achieves $\hat{M}\approx 0.30$, representing a 187-fold improvement over the unconstrained classifier.

To interpret this concretely, suppose a bank has 10{,}000 credit card clients to assess, of whom 200 ($\pi_0=0.02$) are expected to default. If the bank can subject at most $b\pi_0 \times 10{,}000 = 600$ clients to exhaustive review, the proposed method identifies approximately 60 future defaults, whereas the unconstrained classifier identifies fewer than one.

The magnitude of the improvement varies across classification methods. All three methods benefit markedly from the capacity constraint, but Random Forest consistently achieves the highest values of $\hat{M}$ across the settings considered. This behavior is consistent with the strong empirical performance of Random Forest on tabular data.

The effect of SMOTE is method-dependent. For SVM, oversampling yields negligible improvements, with performance remaining close to zero. For Random Forest, SMOTE improves over the unconstrained baseline but remains below the capacity-constrained approach. For $k$-NN, SMOTE is sometimes comparable to the proposed method at lower imbalance levels, but this advantage disappears when $\pi_0 \lesssim 0.02$. Overall, these results suggest that SMOTE is not directly aligned with the target metric $\hat{M}$, since it modifies the training distribution without explicitly linking the oversampling level to the operational constraint.

These results show the potential gains of capacity-constrained classification. A comparison with the simpler post-hoc thresholding baseline is provided in Section~\ref{sec:beyond}.


\section{A more efficient alternative: Capacity-Aware Learning}
\label{sec:beyond}

The capacity-constrained classifiers of Section~\ref{sec:consistency} 
improve upon standard classification by orienting the choice of 
hyperparameters toward $M$ subject to the capacity constraint. 
To understand the limits of this approach, we first compare it with 
a simpler baseline, and then introduce a method that goes beyond it.

\subsection{Comparison with post-hoc thresholding}
\label{subsec:posthoc}
A natural baseline that has not yet been discussed is to leave the 
learning algorithm entirely unchanged and enforce the capacity 
constraint only at the prediction stage. Given any classifier that 
produces a real-valued score $s$, one ranks all observations by score 
and selects a threshold so that at most a fraction $b\pi_0$ of the 
population is flagged. This post-hoc thresholded classifier is simple, 
universally applicable, and satisfies the capacity constraint by 
construction. 

 Assume, without loss of generality, that larger values of $s(x)$ provide stronger evidence in favor of class $0$. Define the threshold,
$$
\tau(s) \;=\; \inf\bigl\{\tau \in \mathbb{R} : \Pr{s(X) \geq \tau} 
\leq b\pi_0 \bigr\},
$$
and the corresponding post-hoc thresholding classifier
$$
g_{\mathrm{p\text{-}h}}(x) \;=\;
\begin{cases}
0 & \text{if } s(x) \  \geq \ \tau(s), \\
1 & \text{otherwise.}
\end{cases}
$$

By construction,
$$
\mathbb{P}\bigl(g_{\mathrm{p\text{-}h}}(X)=0\bigr)\leq b\pi_0.
$$
The following result characterizes when a capacity-constrained classifier reduces to post-hoc thresholding.

\begin{proposition}\label{th:ph}
Let $g$ be a capacity-constrained classifier whose score function $s$ 
has no free parameters. Then $g$ coincides with the post-hoc 
thresholding classifier $g_{\mathrm{p\text{-}h}}$ based on $s$.
\end{proposition}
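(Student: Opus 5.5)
The plan is to make precise what a capacity-constrained classifier ``whose score function has no free parameters'' is, and then show that its only remaining degree of freedom is a scalar threshold. That threshold is fixed by the capacity constraint in exactly the way $\tau(s)$ is fixed.

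Following the constructions of Section~\ref{sec:consistency} (the SVM and random forest cases), I would formalize such a classifier as a member of the one-parameter family
$$h_t(x)=\begin{cases}0 & \text{if } s(x)\geq t,\\ 1&\text{otherwise},\end{cases}\qquad t\in\mathbb{R},$$
where $s$ is fixed once training is done and is oriented so that large scores favour class $0$. The classifier is $g=h_{t^\star}$, with
$$t^\star=\operatorname*{arg\,max}_{t:\,\mathbb{P}(s(X)\geq t)\leq b\pi_0}\mathbb{P}\bigl(s(X)\geq t,\,Y=0\bigr).$$
This is the population version of the calibration step, i.e. minimizing $L$ over $\{h_t\}\cap\mathcal{C}_b$.

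The argument then has three steps, carried out in this order.

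\textbf{Step 1: monotonicity.} Both $t\mapsto \mathbb{P}(s(X)\geq t)$ and $t\mapsto\mathbb{P}(s(X)\geq t,Y=0)$ are non-increasing, since the sets $\{s\geq t\}$ shrink as $t$ grows.

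\textbf{Step 2: the feasible set is an interval.} The feasible set $T=\{t:\mathbb{P}(s(X)\geq t)\leq b\pi_0\}$ is an up-ray $[\tau(s),\infty)$ or $(\tau(s),\infty)$, where $\tau(s)=\inf T$. Left-continuity of $t\mapsto\mathbb{P}(s(X)\geq t)$ shows that $\tau(s)\in T$, so $T=[\tau(s),\infty)$.

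\textbf{Step 3: the maximizer is the left endpoint.} Because the objective is non-increasing on $T$, the maximum over $T$ is attained at its smallest element $t=\tau(s)$. Hence $h_{t^\star}=h_{\tau(s)}=g_{\mathrm{p\text{-}h}}$. The same argument works verbatim with empirical probabilities on the calibration sample, which gives the empirical statement as well.

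The main obstacle is non-uniqueness of the argmax. If $\mathbb{P}(s(X)\in[\tau(s),t'),\,Y=0)=0$ for some $t'>\tau(s)$, the objective is flat there and other thresholds are also optimal. I would resolve this by adopting the convention, implicit in the $\inf$ defining $\tau(s)$, that ties are broken toward the smallest feasible threshold. Alternatively, one can note that any two maximizers differ only on a set of zero class-$0$ mass. On that set they agree on $L$, so they coincide as capacity-constrained classifiers up to $L$-equivalence; if additionally $s(X)$ has no flat regions under $F_0$, they coincide exactly.
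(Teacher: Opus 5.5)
Your route is the paper's route made explicit: with $s$ fixed, the only freedom is the threshold, and the constraint pins it down. Your Steps~1 and~3 show that both $t\mapsto\mathbb{P}(s(X)\geq t)$ and the objective are monotone, so the optimum sits at the smallest feasible threshold. That supplies the justification the paper's one-line proof leaves implicit in ``determined entirely by the capacity constraint''.

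\textbf{The gap is in Step~2.} The map $\phi(t)=\mathbb{P}(s(X)\geq t)$ is indeed left-continuous. But left-continuity only gives $\phi(\tau(s))=\lim_{t\uparrow\tau(s)}\phi(t)\geq b\pi_0$, which is the wrong inequality. Approaching $\tau(s)$ from inside $T$, i.e.\ from the right, gives only $\lim_{t\downarrow\tau(s)}\phi(t)=\mathbb{P}(s(X)>\tau(s))\leq b\pi_0$. The value $\phi(\tau(s))$ exceeds this by the atom $\mathbb{P}(s(X)=\tau(s))$. Hence $\tau(s)\in T$ if and only if $\phi(\tau(s))=b\pi_0$. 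If $s(X)$ has an atom at $\tau(s)$, then:
\begin{itemize}
\item $T=(\tau(s),\infty)$;
\item $h_{\tau(s)}$ violates the constraint;
\item there is no smallest feasible threshold to break ties toward;
\item the conclusion $h_{t^\star}=h_{\tau(s)}$ fails.
\end{itemize}

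This is exactly what happens in the empirical version you say goes through verbatim, because the empirical score distribution is purely atomic. Take distinct calibration scores $s_{(1)}>\cdots>s_{(m)}$ and $K=\lfloor b\pi_0 m\rfloor$ with $1\leq K<m$. Then $\hat T=(s_{(K+1)},\infty)$. The empirical $\tau(s)=s_{(K+1)}$ flags $K+1>b\pi_0 m$ points. Meanwhile every $t\in(s_{(K+1)},s_{(K)}]$ is an empirical maximizer flagging exactly $K$ points.

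\textbf{Repair.} For the population statement, assume what the paper itself assumes in the SVM section: $s(X)$ has no atom at $\tau(s)$. Then $\phi$ is continuous there, $\phi(\tau(s))=b\pi_0$, and $T=[\tau(s),\infty)$, so your Steps~1 and~3 finish the proof. For the sample version, define the post-hoc rule to flag exactly the top $K$ calibration scores, which is what the paper describes in words.

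The same defect undercuts the paper's assertion that $g_{\mathrm{p\text{-}h}}$ satisfies the constraint ``by construction''. The paper's proof glosses over this point rather than resolving it, so you should state the no-atom hypothesis explicitly.
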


\begin{proof}
Since $s$ has no free parameters, the only degree of freedom in 
constructing $g$ is the threshold $\tau$, which is determined 
entirely by the capacity constraint $P(g(X) = 1) \leq b\pi_0$. 
This is precisely the definition of $\tau(s)$, so $g = 
g_{\mathrm{p\text{-}h}}$.
\end{proof}
  
The distinction between post-hoc thresholding and capacity-constrained learning arises when the score itself depends on tuning parameters. In the capacity-constrained procedures introduced above, one considers a family of scores $s_\theta$, where $\theta$ is selected to maximize $P_{0,0}(g_{\theta,b}) = \mathbb{P}(g_{\theta,b}(X)=0\mid Y=0)$ subject to $\mathbb{P}(g_{\theta,b}(X)=0)\leq b\pi_0.$ Thus, the capacity constraint is used not only to choose the final threshold, but also to shape the score function.

As an illustrative example, consider the $k$-NN method. For a fixed number of neighbors $k$,
the score is the estimated conditional probability
$$
\hat{\eta}_k(x)
=
\hat{\mathbb{P}}(Y=0\mid X=x).
$$
The post-hoc thresholding classifier keeps this score fixed and only adjusts the final threshold
to satisfy the capacity constraint:
$$
g_{\mathrm{p\text{-}h}}(x)
=
\begin{cases}
0, & \text{if } \hat{\eta}_k(x)\geq q_{1-b\pi_0},\\
1, & \text{otherwise,}
\end{cases}
$$
where $q_{1-b\pi_0}$ denotes the empirical $(1-b\pi_0)$-quantile of
$\hat{\eta}_k(x_1),\ldots,\hat{\eta}_k(x_n)$.

We compare this post-hoc classifier with the capacity-constrained $k$-NN classifier introduced in
Section~3.2.1. In that procedure, both the number of neighbors $k$ and the threshold parameter
$a_0$ are treated as free parameters. The classifier is selected by minimizing the empirical
loss over the class of admissible capacity-constrained classifiers:
$$
\hat g_b
=
\arg\min_{g\in\mathcal C_b}
\hat L_n(g),
$$
where
$$
\mathcal C_b
=
\left\{
g_{k,a_0}\in\mathcal C:\;
\mathbb{P}\bigl(g_{k,a_0}(X)=0\bigr)\leq b\pi_0,\;
k\in\mathbb N,\;
a_0\in(0,1)
\right\}.
$$
Thus, while $g_{\mathrm{p\text{-}h}}$ applies the capacity constraint only after the $k$-NN score has been constructed, $\hat g_b$ uses the capacity constraint to choose the classifier itself.

In practice, all thresholds are replaced by their empirical counterparts. Thus, the threshold is chosen so that no more than $\lfloor b\pi_0 n\rfloor$ observations in the calibration sample are assigned to class 0. The same empirical calibration is used both for post-hoc thresholding and for the capacity-constrained classifiers after the parameter $\hat{\theta}$ has been selected. The important point is that, in the post-hoc procedure, the value of  $k$ is chosen independently of the capacity-constrained objective.  A classifier that performs well in terms of overall accuracy may still induce a poor ordering of the observations that matter most under the constraint.  

Figure~\ref{fig_gain} quantifies the percentage gain of $\hat g_b$ over $g_{\mathrm{p\text{-}h}}$ on the credit card default dataset, as a function of $\pi_0$ and for $b\in\{1,2,3\}$. The gain is positive across the range of imbalance levels considered, showing that the capacity-constrained approach improves upon the post-hoc baseline. For this dataset, the largest improvements are observed at smaller values of $\pi_0$, where class imbalance is most severe. As $\pi_0$ increases, the gain decreases and the results for the three capacity levels tend to converge.
	\begin{figure}
		\centering
\includegraphics[width=0.6\textwidth]{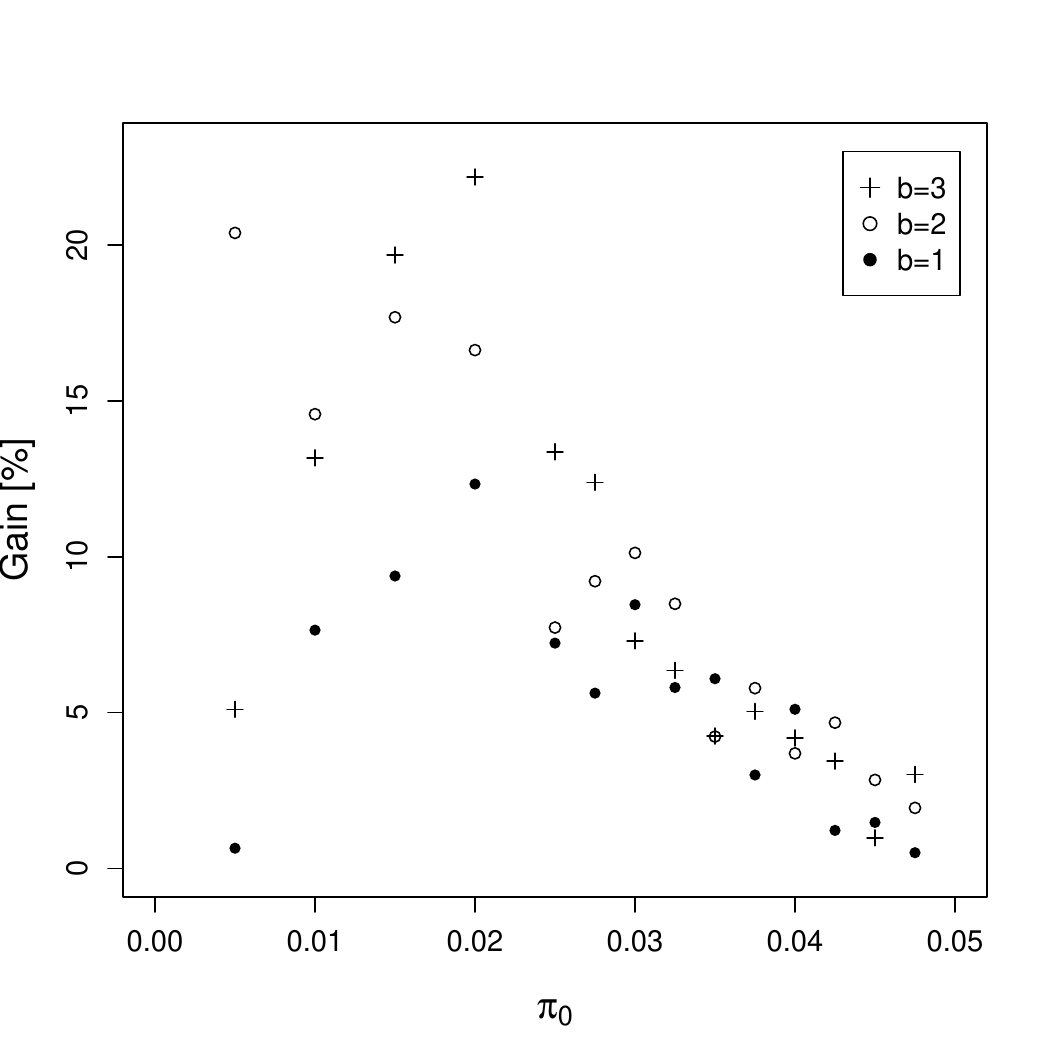}
		\caption{Percentage gain of the capacity-constrained weighted $k$-NN classifier over the post-hoc
thresholding baseline as a function of $\pi_0$, for different values of $b$.   The post-hoc
baseline thresholds the weighted $k$-NN estimate $\widehat{\eta}_k(x)$, computed with Gaussian nearest-neighbor weights $w_j(x)$.   In the implementation used by the \texttt{kknn} package with
\texttt{kernel = "gaussian"}, the Gaussian kernel is applied to internally rescaled nearest-neighbor distances. For the standard $k$-NN benchmark, the number of nearest neighbors $k$ is tuned by cross-validation using classification accuracy. Positive values indicate improved performance over the post-hoc baseline.        }
		\label{fig_gain}
	\end{figure}
    
    Post-hoc thresholding remains useful when retraining is not feasible, for instance when the
practitioner only has access to a pre-trained model. In such cases, $g_{\mathrm{p\text{-}h}}$ provides a simple and principled way to enforce the capacity constraint.

The procedures presented in Sections~3.3--3.5 go one step further. They follow a two-stage strategy: first, a score function is estimated from the data; then, a threshold is selected to satisfy the capacity constraint. This approach is attractive because it can be applied to any classifier producing a real-valued output, without modifying the underlying learning algorithm. However, the decoupling between score estimation and threshold selection may be suboptimal: the score is still trained under the original unconstrained objective and may not rank observations in the order most useful for the capacity-constrained problem.

A more ambitious approach is to incorporate the capacity constraint directly into the training
stage, so that the learned classifier is shaped by the resource limitation from the outset. We refer to this strategy as capacity-aware learning. In what follows, we present a capacity-aware method for support vector machines. Similar ideas could also be developed for random forests and other learning methods.

\subsection{Capacity-Aware Support Vector Selection} \label{subsec:awaresvm}
Classical support vector machines determine the separating hyperplane using all observations associated with nonzero dual coefficients ($\alpha_i$).  More precisely, for binary labels $y_i\in\{0,1\}$, let $\tilde y_i=2y_i-1\in\{-1,1\}$. The SVM score can then be written as
$$
f(x)=\sum_{i=1}^n \alpha_i \tilde{y}_i K(x_i,x)+a,
$$
the support vectors are precisely those observations satisfying $\alpha_i>0.$ This selection is entirely driven by geometric margin considerations. However, in imbalanced classification problems under operational constraints, maximizing the margin does not necessarily coincide with the final decision objective. In particular, when the goal is to maximize the detection probability of a target class under a capacity constraint, different support vectors may have very different impacts on the constrained performance measure.

Motivated by this observation, we propose a capacity-aware support vector selection procedure. Instead of using all support vectors returned by the standard SVM solution, we construct a reduced subset of support vectors chosen according to their contribution to the constrained classification objective.

Let
$
SV=\{i:\alpha_i>0\}
$
denote the set of support vectors obtained from a standard SVM fit. For any subset
$
S\subseteq SV,
$
define the restricted score function
$$
f_S(x)=\sum_{i\in S}\alpha_i \tilde y_i K(x_i,x)+a.
$$
Given a capacity level $b\pi_0$, we define the associated threshold
$\tau_b(S)$
as the empirical quantile satisfying
$$
\mathbb{P}(f_S(X)\leq \tau_b(S))\leq b\pi_0.
$$

The resulting constrained classifier is
$$
g_{S,b}(x)=
\mathbf{1}\{f_S(x)\geq \tau_b(S)\}.
$$
The proposed methodology searches for subsets of support vectors maximizing the detection probability of the target class under the capacity constrain:
$$
S^\star
=
\arg\max_{S\subseteq SV}
\mathbb{P}(g_{S,b}(X)=0\mid Y=0).
$$

The resulting constrained SVM classifier is then defined by
$$
g^\star_{b}=g_{S^\star,b}.
$$

Since exhaustive search is computationally infeasible when the number of support vectors is large, we employ a greedy forward-selection strategy.

Starting from the empty set,
$$
S_0=\emptyset,
$$
at each iteration the algorithm adds the support vector producing the largest increase in constrained performance:
$$
i^\star
=
\arg\max_{i\in SV\setminus S_t}
M_b(g_{S_t\cup\{i\},b}).
$$

The selected support vector is then added to the active subset:
$$
S_{t+1}=S_t\cup\{i^\star\}.
$$
The procedure is repeated until one of two stopping criteria is met: either the selected subset
contains $n_{\max}$ support vectors, or adding any remaining support vector fails to improve the constrained performance criterion.

Unlike classical SVM optimization, where support vectors are selected solely according to margin geometry, the proposed approach selects support vectors according to their contribution to constrained minority-class detection. Consequently, the resulting classifier is explicitly adapted to operational constraints and class imbalance.

This framework can be interpreted as a bridge between statistical classification and constrained resource allocation, where support vectors are selected not only for geometric separation but also for their relevance to the final decision objective.

\subsection{Empirical Illustration}
\label{sec:empirical}

The goal of this section is not to conduct an exhaustive benchmark of classification methods, but rather to illustrate the theoretical hierarchy established in Sections~\ref{sec:consistency}
and~\ref{subsec:posthoc} on real data. Specifically, we aim to show 
that the gains from capacity-constrained classification are 
empirically meaningful, and that the method introduced in 
Section~\ref{sec:beyond} extracts additional value beyond what can 
be achieved by threshold adjustment alone.

To this end, we consider three variants of a support vector machine 
classifier. In all the variants, the kernel and regularization 
parameter are fixed in advance and not optimized with respect to 
$P_{00}$, so that by Theorem~\ref{th:ph} the 
capacity-constrained classifier reduces to post-hoc thresholding ($g_b=g_{\mathrm{p\text{-}h}}$).  This is deliberate: it allows us to isolate the contribution of the threshold adjustment alone, before introducing the additional degrees of freedom that characterize the third variant. The three variants are defined as follows. The standard SVM classifier, denoted by $g$, is applied directly without enforcing the capacity constraint. The classifier $g_b$ is the capacity-constrained SVM of Section~3.4. By Theorem~\ref{th:ph}, this coincides with the post-hoc thresholding classifier $g_{\mathrm{p\text{-}h}}$ in this setting, since the threshold $\tau$ is the only degree of freedom available to enforce the capacity constraint. Finally, $g_b^\star$ denotes the capacity-aware SVM introduced in Subsection~\ref{subsec:awaresvm}, which incorporates the capacity constraint directly into the training stage and can improve upon $g_b$ by exploiting additional degrees of freedom.

\begin{figure}
		\centering
\includegraphics[width=0.94\textwidth]{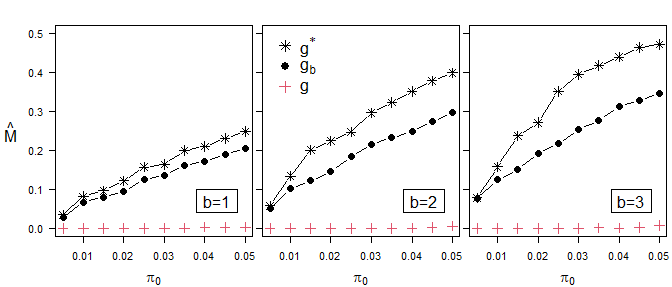}
	\caption{
Performance of the standard SVM classifier $g$, the capacity-constrained SVM $g_b$, and the capacity-aware SVM $g_b^\star$ as a function of $\pi_0$, for $b=1,2,3$. 
}
\label{fig_svm_capacity_aware}
	\end{figure}
For each method, we report $\hat{M}$ at three representative values of $b$. 
All results are averaged over 30 random realizations. In the support-vector selection procedure,
we set $n_{\max}=30$, while the total number of support vectors in the training stage is typically of the order of 100. We considered two variants: one using all 30 support vectors selected by the
greedy procedure, and another selecting the subset along the greedy path that maximizes $\hat M$ on the validation set. The results shown here correspond to the first variant; the second yielded similar conclusions.

Figure~\ref{fig_svm_capacity_aware} shows that the ordering
$$g \leq g_b \leq g_b^\star
$$
holds across the considered imbalance and capacity levels. The standard SVM classifier $g$ remains close to zero, while the threshold-calibrated classifier $g_b$ improves substantially. The capacity-aware classifier $g_b^\star$, based on support-vector selection, achieves the highest values of $\hat M$ in most settings, with larger gains as $b$ increases.
Overall, the results provide empirical evidence that capacity-aware learning can improve upon post-hoc calibration by reshaping the SVM score itself.

\color{black}

		\section{Extensions}
\label{sec:extensions}

We discuss two natural extensions of the framework developed in
Sections~\ref{sec:theory}--\ref{sec:empirical}: multiclass classification
(Section~\ref{subsec:multiclass}), and online classification under capacity constraints
(Section~\ref{subsec:online}).

\subsection{Multiclass classification}
\label{subsec:multiclass}

The binary framework extends naturally to settings with $M \geq 2$ classes. Let $(X, Y) \in
\mathbb{R}^d \times \{0, 1, \ldots, M-1\}$ and let $g : \mathbb{R}^d \to \{0, 1, \ldots, M-1\}$
be a classifier. In the classical setup, the optimal classifier minimizes the misclassification
probability $\mathbb{P}(g(X) \neq Y)$.

When only one class, say class 0 is rare and subject to a capacity constraint, the framework
of Section~\ref{sec:theory} carries over directly. In this case, the admissible class is defined as
$$
    \mathcal{C}_b
    :=
    \left\{
    g : \mathbb{P}(g(X)=0) \leq b\pi_0
    \right\}.
$$
The objective is then to choose, within $\mathcal{C}_b$, the classifier that maximizes the
detection probability of the rare class,
$$
    g^\star = \operatorname*{arg\,max}_{g \in \mathcal{C}_b}
    \mathbb{P}(g(X) = 0 \mid Y = 0).
$$
The optimal solution again thresholds the likelihood ratio $f_0(x) / \sum_{j \neq 0} \pi_j
f_j(x)$ at a value $\gamma^\star$ chosen to satisfy the capacity constraint, by an argument
identical to the proof of Proposition~\ref{prop:optimal}.

When several classes are rare, say classes 0 and 1, one may define joint capacity constraints that limit the total proportion of observations assigned to the rare classes. For example, a classifier \(g\) may be required to belong to the class
$$
    \mathcal{C}_{b_0, b_1} := \{g :
    \mathbb{P}(g(X) = 0) \leq b_0 \pi_0 \text{ and }
    \mathbb{P}(g(X) = 1) \leq b_1 \pi_1\},
$$
and optimize the weighted sum of per-class sensitivities,
$$
    g^\star = \operatorname*{arg\,max}_{g \in \mathcal{C}_{b_0, b_1}}
    \Bigl[
        \mathbb{P}(g(X) = 0 \mid Y = 0)
        + \mathbb{P}(g(X) = 1 \mid Y = 1)
        + \mathbb{P}(g(X) = Y \mid Y \in \{2, \ldots, M-1\})
    \Bigr].
$$
This formulation assigns equal weight to the detection of each imbalanced class and to the
correct classification of the majority block. The capacity parameters $b_0$ and $b_1$ can be
set independently, reflecting different operational costs for each minority class. The
consistency results of Section~\ref{sec:consistency} extend to this setting under the same
conditions, since $\mathcal{C}_{b_0, b_1} \subseteq \mathcal{C}$ and
Theorem~\ref{thm:vc} applies to each constraint separately.

\subsection{Online classification}
\label{subsec:online}

In many applications, observations arrive sequentially in real time and decisions must be made
before the next observation is received. Fraud detection and medical triage are canonical
examples. The capacity constraint takes a different form in this setting: rather than bounding
the global proportion of positive predictions, one typically imposes a constraint on the
expected rate of positive predictions per unit time, i.e., $\mathbb{E}[\mathbf{1}_{g(X_t) = 0}]
\leq b\pi_0$ for each time step $t$.

The threshold-based classifiers developed in Sections~\ref{subsec:svm}
and~\ref{subsec:rf} adapt naturally to this setting. The threshold $\tau_t$ can be updated
sequentially as new observations arrive, using a sliding window or an exponential moving average
of the recent flagging rate:
\[
    \hat{P}_t = (1 - \alpha)\hat{P}_{t-1} + \alpha \mathbf{1}_{\{g(X_t) = 0\}},
\]
for a smoothing parameter $\alpha \in (0,1)$. The threshold is adjusted upward when
$\hat{P}_t > b\pi_0$ and downward otherwise. This procedure requires no retraining of the
underlying model and adapts to slow distributional shifts, such as changes in the base rate
$\pi_0$ over time.

A full theoretical treatment of the online setting, including regret bounds and consistency
under non-stationary distributions, is beyond the scope of this paper and is left for future
work. We note, however, that the time-varying parameter $b_t$ case,where the available
capacity itself fluctuates due to analyst availability or system load, is a particularly
natural and practically relevant direction.

\section{Conclusions}
\label{sec:conclusions}

We have introduced a formal framework for classification under capacity constraints, motivated by the observation that in many real-world applications the relevant question is not how to minimize overall error, but how to detect as many minority-class instances as possible given a fixed budget of positive predictions. This is a distinct problem from standard imbalanced classification, and it calls for a distinct solution.

The central theoretical contribution is the characterization of the 
optimal capacity-constrained classifier (Proposition~\ref{prop:optimal}) 
and its equivalence with the classical Bayes classifier under a 
modified prior (Proposition~\ref{prop:bayes_equiv}). This equivalence 
reveals that the capacity parameter $b$ plays the role of an implicit 
prior adjustment: it amplifies the minority class to a degree 
determined not by an arbitrary oversampling factor, but by the 
operational constraint itself. Unlike SMOTE and related 
data-augmentation methods, the reweighting is transparent, grounded 
in observable quantities, and does not distort the geometry of the 
minority class.

The statistical contribution is the establishment of consistency for 
five families of learning procedures --- kernel density estimators, 
$k$-nearest neighbors, support vector machines, random forests, and 
neural networks --- under the capacity constraint. A unifying insight 
is that the capacity constraint reduces the effective complexity of 
the classifier class (Theorem~\ref{thm:vc}), so that any consistent 
procedure over the unconstrained class remains consistent over the 
constrained class. Full proofs are provided in each case, replacing 
the proof sketches that typically appear in the literature for 
related results.

A further contribution is the clarification of the relationship between capacity-constrained learning and post-hoc thresholding. We show (Theorem~\ref{th:ph}) that the two procedures coincide whenever the score function has no free parameters oriented toward the capacity-constrained objective: in that case, adjusting the threshold is the only available degree of freedom, and the 
resulting classifier is equivalent to post-hoc thresholding. This result identifies precisely where the gains from capacity-constrained learning originate, not from threshold adjustment per se, but from fitting the model with the right objective from the start.

Building on this insight, we introduce capacity-aware learning as a  more ambitious approach that incorporates the capacity constraint  directly into the training stage. As a concrete instance, we develop a capacity-aware support vector machine that shapes the decision boundary itself according to the operational constraint. The empirical contribution is the introduction of the capacity-adjusted detection rate $M$ as the appropriate performance metric in this setting, and its systematic evaluation across methods and datasets. 
The experiments demonstrate that capacity-constrained classifiers substantially outperform both classical approaches and SMOTE under high imbalance regimes. A direct comparison among crude classification, post-hoc thresholding, and capacity-aware learning confirms the theoretical hierarchy: each level of the framework yields measurable gains, with the capacity-aware SVM achieving the strongest performance.

Several directions remain open. On the theoretical side, the 
analysis of capacity-aware learning could be extended to other 
method families, particularly gradient-based methods where the 
capacity constraint could be incorporated through a Lagrangian 
relaxation. Establishing consistency and convergence rates for the 
capacity-aware SVM introduced here is also a natural next step. On 
the applied side, the extension to settings with random or 
time-varying capacity $b_t$, arising from fluctuations in analyst 
availability or system load, is both practically relevant and 
theoretically non-trivial. The online setting, where decisions must 
be made sequentially under a rolling capacity constraint, presents 
additional challenges that merit a dedicated treatment. Another interesting direction for future work is to extend this approach to reinforcement learning settings.

Taken together, the theoretical and empirical results of this paper  support a single overarching message: capacity should be treated as  a first-class constraint of the learning problem, on equal footing with the statistical objective itself.

			\section*{Appendix}
            \appendix
\noindent\textbf{A. Robustness lo large oversampling rates}	\label{appendix_A}		

Figure~5 reports the same results as Fig.~3, but recomputing the SMOTE benchmarks under a more aggressive, non-linear oversampling scheme, with amplification factors reaching up to 20. Panel~A of Fig 4 shows the oversampling factor used in each simulation setting, as a function of $\mathbb{P}(Y=0)$. Panel~B shows the resulting minority-class proportion after applying SMOTE, which is the proportion used to train the algorithm.

For example, when the original sample contains only $0.05\%$ observations from the minority class, the SMOTE-augmented sample has approximately $9.5\%$ minority-class observations: $0.05\%$ real observations and about $9\%$ synthetic observations.

As shown in Figure~5, even under such extreme oversampling, the proposed method yields better performance than approaches based on oversampling without a principled criterion.
\begin{figure}
		\centering
		\includegraphics[width=1\textwidth]
{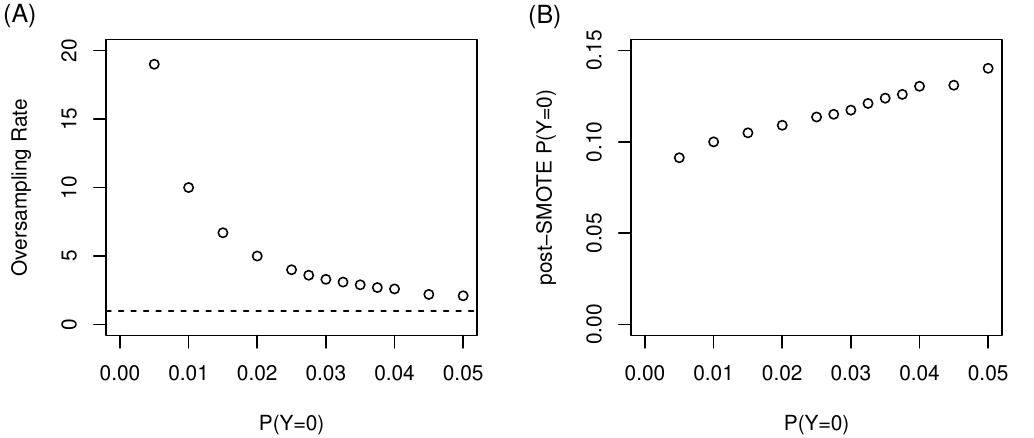}
\caption{
Oversampling scheme as a function of class imbalance. Panel (A) shows the
oversampling factor used for each value of $\pi_0$. Panel (B) shows the resulting minority-class proportion after applying SMOTE, used for training
the algorithm.
}
		\label{pio_post}
	\end{figure}
    
	\begin{figure}
		\centering
		\includegraphics[width=1\textwidth]{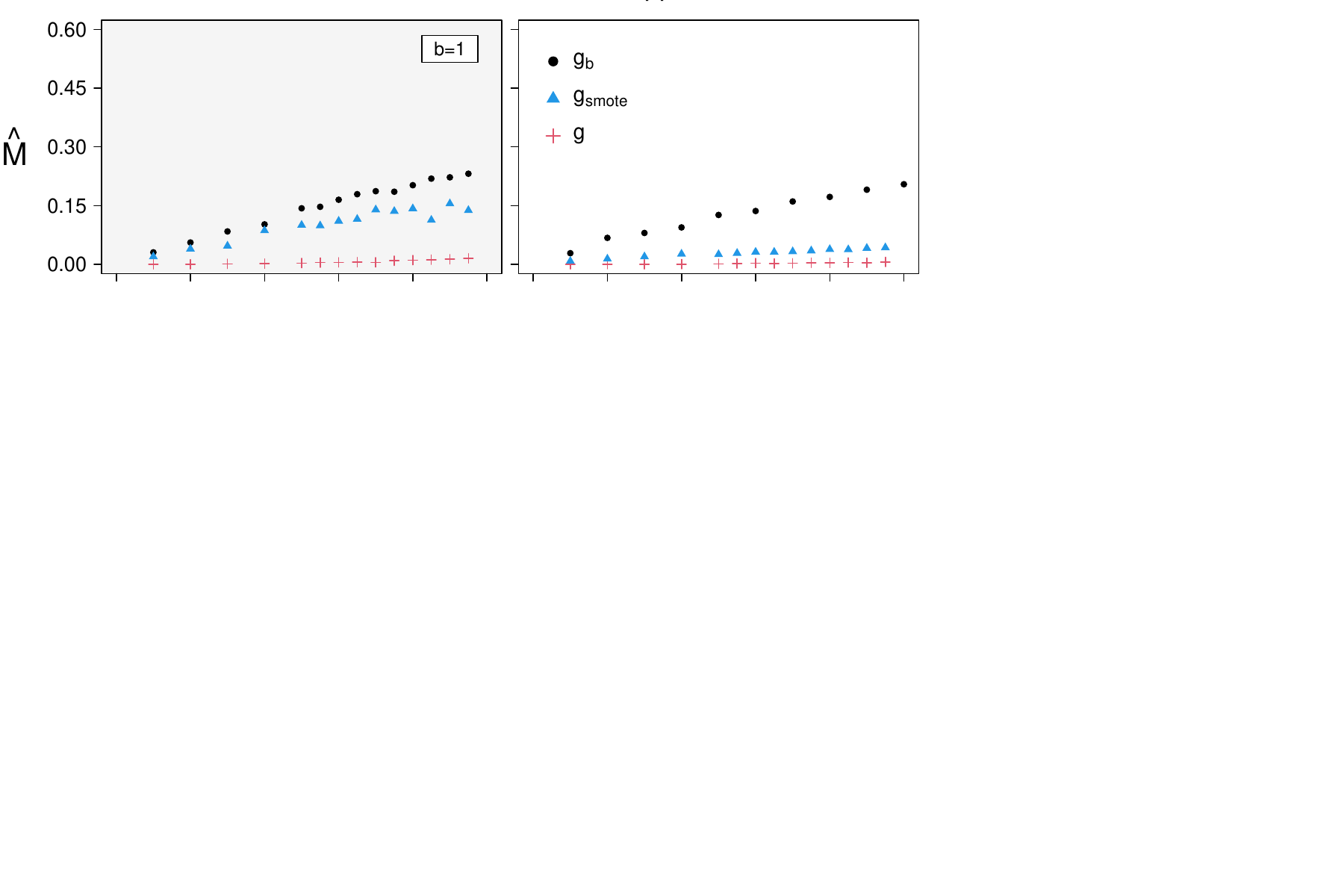}
		\caption{Same experimental setting as in Figure~3, with SMOTE implemented according to
the scheme described in Figure~4.}
		\label{no_lineal}
	\end{figure}

    \bibliographystyle{plainnat}

\bibliography{unbalanced-ref}
\end{document}